\def\eqref#1{equation~\ref{#1}}
\def\1{\bm{1}}
\DeclareMathAlphabet{\mathsfit}{\encodingdefault}{\sfdefault}{m}{sl}
\SetMathAlphabet{\mathsfit}{bold}{\encodingdefault}{\sfdefault}{bx}{n}
\newcommand{\cmark}{\text{\ding{51}}}%
\newcommand{\xmark}{\text{\ding{55}}}%
\definecolor{new_blue}{HTML}{D4E1F5}
\newcommand{\tikzmark}[1]{\tikz[overlay,remember picture] \node (#1) {};}
\newcommand*{\AddNote}[4]{%
    \begin{tikzpicture}[overlay, remember picture]
        \draw [decoration={brace,amplitude=0.5em},decorate,thick,black]
            ($(#3)!(#1.north)!($(#3)-(0,1)$)$) --  
            ($(#3)!(#2.south)!($(#3)-(0,1)$)$)
                node [align=center, text width=2.5cm, pos=0.5, anchor=west] {#4};
    \end{tikzpicture}
}%
\theoremstyle{plain}
\newtheorem{theorem}{Theorem}[section]
\theoremstyle{definition}
\newtheorem{definition}[theorem]{Definition}
\theoremstyle{remark}
\title{Differentially Private Deep Model-Based Reinforcement Learning}
\author{Alexandre Rio, Merwan Barlier, Igor Colin \& Albert Thomas\\
Noah's Ark Lab Paris, Huawei Technologies France\\
Correspondence to: \texttt{alexandre.rio2@huawei.com}
}
\begin{document}

\maketitle

\begin{abstract}
We address private deep offline reinforcement learning (RL), where the goal is to train a policy on standard control tasks that is differentially private (DP) with respect to individual trajectories in the dataset. To achieve this, we introduce \textsc{PriMORL}, a model-based RL algorithm with formal differential privacy guarantees.
\textsc{PriMORL} first learns an ensemble of trajectory-level DP models of the environment from offline data.
It then optimizes a policy on the penalized private model, without any further interaction with the system or access to the dataset. 
In addition to offering strong theoretical foundations, we demonstrate empirically that \textsc{PriMORL} enables the training of private RL agents on offline continuous control tasks with deep function approximations, whereas current methods are limited to simpler tabular and linear Markov Decision Processes (MDPs). We furthermore outline the trade-offs involved in achieving privacy in this setting.
\end{abstract}

\section{Introduction}


Despite Reinforcement Learning's (RL) notable advancements in various tasks, there have been many obstacles to its adoption for the control of real systems in the industry. In particular, online interaction with the system may be impractical or hazardous in real-world scenarios. 
Offline RL \citep{Levine_Offline_RL_2020} refers to the set of methods enabling the training of control agents from static datasets.
While this paradigm shows promise for real-world applications, its deployment is not without concerns. Many studies have warned of the risk of privacy leakage when deploying machine learning models, as these models can memorize part of the training data. For instance, \citet{RigakiAttacksSurvey2020} review the proliferation of sophisticated privacy attacks.
Of the various attack types, membership inference attacks \citep{Shokri2017} stand out as the most prevalent. In these attacks, the adversary, with access to a black-box model trainer, attempts to predict whether a specific data point was part of the model's training data. 
Unfortunately, reinforcement learning is no exception to these threats. In a recent contribution, \citet{gomrokchi_membership_2022} exploit the temporal correlation of RL samples to perform powerful membership inference attacks using convolutional neural classifiers. More precisely, they demonstrate that given access to the output policy, 
an adversary can learn to infer the presence of a specific trajectory --- which is the result of a sequence of interactions between a user and the system --- in the training dataset with great accuracy.
The threat of powerful membership inference attacks is particularly concerning in reinforcement learning, where a trajectory can unveil sensitive user information. For instance, when using RL to train autonomous vehicles \citep{ autonomous_driving_survey}, we need to collect a large number of trips that may disclose locations and driving habits. Similarly, a browsing journey collected to train a personalized recommendation engine may contain sensitive information about the user’s behavior \citep{news_recommendation}. In healthcare, RL's potential for personalized treatment recommendation \citep{ rl_treatment_recommendation} underscores the need to safeguard patients' treatment and health history.

A large body of work has focused on protecting against privacy leakages in machine learning. Differential Privacy (DP), which allows learning models without exposing sensitive information about any particular user in the training dataset, has emerged as the gold standard. While successfully applied in various domains, such as neural network training \citep{Abadi_2016} and multi-armed bandits \citep{TossouD16}, extending differential privacy to reinforcement learning poses challenges. In particular, the many ways of collecting data and the correlated nature of training samples resulting from online interactions make it difficult to come up with a universal and meaningful DP definition in this setting. Several attempts, such as joint differential privacy \citep{vietri2020private}, have been made, but extending definitions from bandits, they do not scale to large state and action spaces. Indeed, their scope is limited to tabular and linear Markov decision processes (MDPs) with finite horizon, which make them not suitable to the tasks typically encountered in deep RL.

In addition to its practical significance, the offline RL setting arguably offers a more natural framework for privacy. In contrast to online RL, which inherently blends input and output data throughout the process, an offline RL method can be seen as a black-box randomized algorithm $h$ taking in as input a fixed dataset $\mathcal{D}$, partitioned in trajectories, and outputting a policy $\hat{\pi}$. An adversary having access to $\hat{\pi}$ may successfully learn to infer the membership of a specific trajectory in $\mathcal{D}$, which can, as emphasized before, reveal sensitive user information. Hence, similarly to \citet{qiao_offline_2022}, we use the following informal DP definition for offline RL, which we refer to as \textit{trajectory-level differential privacy} (TDP): adding or removing a single trajectory from the input dataset of an offline RL algorithm must not impact significantly the distribution of the output policy. 
If \citet{qiao_offline_2022} have proposed the first private algorithms for offline RL, building on value iteration methods, their approach is also restricted to finite-horizon tabular and linear MDPs, limiting its applicability. It is not suited for standard control tasks such as those from Gym \citep{openai_gym} and the DeepMind Control Suite \citep{DM_Control_Suite}, which often require deep neural function approximations. This leaves a huge gap between the current private RL literature and real-world applications. In this work, we are, according to our knwoledge, the first to tackle deep RL tasks in the infinite-horizon discounted setting under differential privacy guarantees, paving the way for enhanced applications of private RL in more complex scenarios.

\paragraph{Contributions.}

While previous work in the differentially private RL literature is essentially restricted to finite-horizon tabular and linear Markov decision processes (MDPs), with experiments reduced to simple numerical simulations, this work is the first attempt to tackle deep RL problems in the infinite-horizon discounted setting.
To this end, we use a model-based approach, named \textsc{PriMORL}, which exploits a model of the environment to generalize the information contained in the offline data to unexplored regions of the state-action space. We introduce a method for training an ensemble of models with differential privacy guarantees at the trajectory-level, and mitigate the increased model uncertainty during model-based policy optimization. In addition to offering strong theoretical foundations and formal privacy guarantees, we show empirically that \textsc{PriMORL} can train private policies with competitive privacy-performance trade-offs on standard continuous control benchmarks, demonstrating the potential of our approach.


\section{Related Work}
\label{sec:related_work}

Offline RL \citep{Levine_Offline_RL_2020, Figueirido2022} focuses on training agents without further interactions with the system, making it essential in scenarios where data collection is impractical \citep{SinghKSRobotic22, SiqiHealthcare2020, autonomous_driving_survey}. Model-based RL \citep{MoerlandMBRLSurvey2023} can further reduce costs or safety risks by using a learned environment model to simulate beyond the collected data and improve sample efficiency \citep{ChuaCML18}. \citet{ArgensonD21} demonstrate that model-based offline planning, where the model is trained on a static dataset, performs well in robotic tasks. However, offline RL faces challenges like \textit{distribution shift} \citep{FujimotoMP19}, where the limited coverage of the dataset can lead to inaccuracies in unexplored state-action regions, affecting performance. Methods like \textsc{MOPO} \citep{yu_mopo_2020}, \textsc{MOReL} \citep{kidambi_morel_2021}, and \textsc{Count-MORL} \citep{kim_count_2023} address this by penalizing rewards based on model uncertainty, achieving strong results on offline benchmarks. Still, key design choices in offline MBRL require further exploration, as highlighted by \cite{mbrl_uncertainty_2022}.

On the other hand, Differential Privacy (DP), established by \citet{Dwork06}, has become the standard for privacy protection. Recent research has focused on improving the privacy-utility trade-off, with relaxations of DP and advanced composition tools enabling tighter privacy analyses \citep{DworkRV10, DworkR16, BunCDP2016, MironovRDP2017}. Notably, \textsc{DP-SGD} \citep{Abadi_2016} has facilitated the development of private deep learning algorithms, despite ongoing practical challenges \citep{ponomareva_how_2023}. Concurrently, sophisticated attack strategies have underscored the necessity for robust DP algorithms \citep{RigakiAttacksSurvey2020}. Recent studies have shown that reinforcement learning (RL) is also vulnerable to privacy threats \citep{pan_how_2019, prakash_how_2022, gomrokchi_membership_2022}. As RL is increasingly applied in personalized services \citep{Rl_perso_survey_2020}, the need for privacy-preserving training techniques is critical. Although DP has been successfully extended to multi-armed bandits \citep{TossouD16, Basu2019}, existing RL algorithms (\textit{e.g.}, \citet{vietri2020private}, \citet{zhou_differentially_2022}, \citet{qiao_tabular_2023}) with formal DP guarantees mainly apply to episodic tabular or linear MDPs and lack empirical validation beyond basic simulations. 
Moreover, private offline RL remains underexplored. Only \citet{qiao_offline_2022} have proposed DP offline algorithms, which, while theoretically strong, are also restricted to finite-horizon tabular and linear MDPs. Consequently, no existing work has introduced DP methods that can handle deep RL environments in the infinite-horizon discounted setting, a critical step toward deploying private RL algorithms in real-world applications. With this work, we aim to fill this gap by proposing a differentially private, deep model-based RL method for the offline setting.
\section{Preliminaries}
\label{sec:preliminaries}

\subsection{Offline Model-Based Reinforcement Learning}
\label{sec:offline_mbrl}

We consider an infinite-horizon discounted MDP, that is a tuple $\mathcal{M} = \left(\mathcal{S}, \mathcal{A}, P, r, \gamma, \rho_0 \right)$ where $\mathcal{S}$ and $\mathcal{A}$ are respectively the state and action spaces, $P: \mathcal{S} \times \mathcal{A} \longrightarrow \Delta\left(\mathcal{S}\right)$ the transition dynamics (where $\Delta\left(\mathcal{X}\right)$ denotes the space of probability distributions over $\mathcal{X}$), $r: \mathcal{S} \times \mathcal{A} \longrightarrow [0, 1]$ the reward function, $\gamma \in [0, 1)$ a discount factor and $\rho_0 \in \Delta\left(\mathcal{S}\right)$ the initial state distribution. The dynamics satisfy the Markov property, \textit{i.e.}, the next state $s^\prime$ only depends on current state and action. The goal is to learn a policy $\pi: \mathcal{S} \longrightarrow \Delta\left(\mathcal{A}\right)$ maximizing the expected discounted return $ \eta_\mathcal{M}(\pi) := \mathop{\mathbb{E}}_{\tau \sim \pi, \mathcal{M}}\left[R(\tau)\right]$, where $R(\tau) = \sum_{t=0}^\infty \gamma^t r_t$. The expectation is taken w.r.t. the trajectories $\tau = \left((s_t, a_t, r_t)\right)_{t \ge 0}$ generated by $\pi$ in the MDP $\mathcal{M}$, \textit{i.e.}, $s_0 \sim \rho_0$, $s_{t+1} \sim P(\cdot \vert s_t, a_t)$ and $a_t \sim \pi(\cdot | s_t)$. 

In offline RL, we assume access to a dataset of $K$ trajectories $\mathcal{D}_K = \left(\tau_k\right)_{k=1}^K$, where each $\tau_k = (s_t^{(k)}, a_t^{(k)}, r_t^{(k)})_{t \ge 0}$ has been collected with an unknown behavioral policy $\pi^B$. $\tau_k$ can be seen as the result of the interaction of a user $u_k$ with the environment. The objective is then to learn a policy $\hat{\pi}$ from $\mathcal{D}_K$ (without any further interaction with the environment) which performs as best as possible in $\mathcal{M}$. To achieve this goal, we consider a model-based approach. In this context, we learn estimates of both the transition dynamics and the reward function, denoted $\hat{P}$ and $\hat{r}$ respectively, from the offline dataset $\mathcal{D}_K$. This results in an estimate of the MDP $\hat{\mathcal{M}} = (\mathcal{S}, \mathcal{A}, \hat{P}, \hat{r}, \gamma, \rho_0)$. We can then use the model $\hat{\mathcal{M}}$ as a simulator of the environment to learn a policy $\hat{\pi}_{\hat{\mathcal{M}}}$, without further access to the dataset or interactions with the real environment modeled by $\mathcal{M}$. Note that if the policy $\hat{\pi}_{\hat{\mathcal{M}}}$ is trained to maximize the expected discounted return in the MDP model $\hat{\mathcal{M}}$, \textit{i.e.}, $\hat{\pi}_{\hat{\mathcal{M}}} \in \text{arg}\!\max \eta_{\hat{\mathcal{M}}}(\pi)$, we eventually want to evaluate the policy in the true environment $\mathcal{M}$, that is using $\eta_{\mathcal{M}}$.

\subsection{Differential Privacy}
\label{sec:diff_privacy}

When learning patterns from a dataset, differential privacy \citep{Dwork06} protects against the leakage of sensitive information in the data by ensuring that the output of the algorithm does not change significantly when adding or removing a data point, as formally stated in Definition \ref{def:dp}.
\begin{definition} \label{def:dp} \emph{$(\epsilon, \delta)$-differential privacy.}
    Given $\epsilon > 0$, $\delta \in [0, 1)$, a \textit{mechanism} $h$ (\textit{i.e.}, a randomized function of the data) is $(\epsilon, \delta)$-DP if for any pair of datasets $D$, $D^\prime$ that differ in a most one element (referred to as \textit{neighboring datasets}, and denoted $d(D, D^\prime) = 1$), and any subset $\mathcal{E}$ in $h$'s range:
    \[
        \mathbb{P} \left(h(D) \in \mathcal{E} \right) \le e^\epsilon \cdot \mathbb{P} \left(h(D^\prime) \in \mathcal{E} \right) + \delta \enspace .
    \]
\end{definition}
In particular, $\epsilon$ controls the strength of the privacy guarantees with larger $\epsilon$, which increases as $\epsilon$ grows.
To achieve $\left(\epsilon, \delta\right)$-DP, the standard approach is to add a zero-mean random noise to the output of the (non-private) function $f$, whose magnitude $\sigma$ scales with $\Delta_\ell(f) / \epsilon$, where $\Delta_\ell(f):=\max\limits_{d(D, D^\prime)=1} \lVert f(D) - f(D^\prime) \rVert_\ell$ is the sensitivity of $f$.
One of the most used DP mechanisms is the \textit{Gaussian mechanism}, which provably guarantees $(\epsilon, \delta)$-DP for $\epsilon, \delta \in (0,1)$ by adding random noise from a Gaussian distribution with magnitude $\displaystyle \sigma = \epsilon^{-1} \sqrt{2 \log{(1.25/\delta)}} \cdot \Delta_2(f)$. From such simple mechanisms, we can derive complex DP algorithms using the \textit{sequential} and \textit{parallel composition} properties of DP, as well as its \textit{immunity to post-processing} (\textit{i.e.}, if $h$ is $(\epsilon, \delta)$-DP and $g$ is data-independent, then $g \circ h$ remains $(\epsilon, \delta)$-DP). 

The Gaussian mechanism is central to \textsc{DP-SGD} \citep{Abadi_2016}, a learning algorithm that modifies classic SGD to ensure (approximate) differential privacy. By adding Gaussian noise to the gradients and bounding their norm by a constant $C$, \textsc{DP-SGD} enables private neural network training \citep{ponomareva_how_2023}. To track the total privacy budget $\epsilon_{\text{tot}}$ spent by \textsc{DP-SGD}, \citet{Abadi_2016} developed the \textit{moments accounting} method that provides a $\left(\mathcal{O}(q \epsilon \sqrt{T}), \delta\right)$-DP guarantee, where $q$ is the sampling ratio, $T$ is the number of iterations, and $\epsilon$ is the privacy parameter. \textsc{DP-SGD} relies strongly on privacy amplification by sub-sampling \citep{balle_privacy_2018}.
Studies have also analyzed error bounds for \textsc{DP-SGD} under various loss assumptions \citep{BassilyST14, KangLLW23}.


\section{Differentially Private Model-Based Offline Reinforcement Learning}
\label{sec:dp_ombrl}

\begin{figure*}[t]
\vskip 0.2in
\begin{center}
\centering{\includegraphics[width=\textwidth]{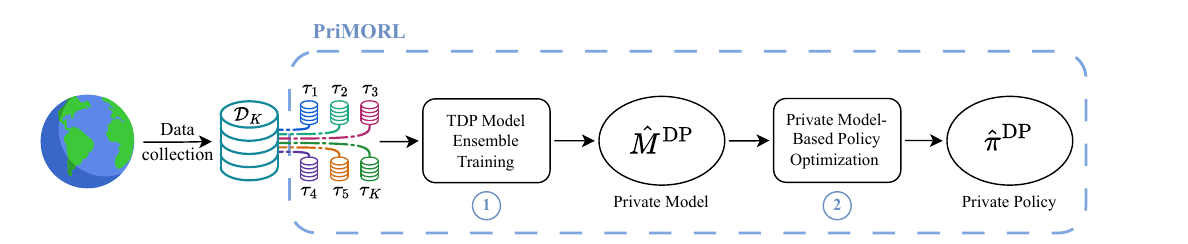}}
\caption{\textsc{PriMORL} with its two main components: 1) private model training and 2) MBPO.}
\label{fig:offline_mbrl}
\end{center}
\end{figure*}

We now describe our model-based approach for learning differentially private RL agents from offline data, which we call \textsc{PriMORL} (for \underline{Pri}vate \underline{M}odel-Based \underline{O}ffline \underline{RL}). After defining trajectory-level differential privacy (TDP) in offline RL (Section \ref{sec:dp_orl}), we address the learning of a private model from offline data (Section \ref{sec:dp_model_learning}). Finally, we demonstrate how we optimize a policy under the private model (Section \ref{sec:private_policy_opt}). Exploiting the \textit{post-processing} property of DP, we show that ensuring model privacy alone is enough to achieve a private policy. Figure \ref{fig:offline_mbrl} provides a high-level description of \textsc{PriMORL}.

\subsection{Trajectory-level Privacy in Offline Reinforcement Learning}
\label{sec:dp_orl}

We introduce the following formal definition for trajectory-level differential privacy (TDP) in offline RL. It can be seen as a reformulation of the definition used in \citet{qiao_offline_2022}, which is the first work to tackle differential privacy in this setting.

\begin{definition} \label{def:tdp} \emph{$(\epsilon, \delta)$-TDP}. Let $h$ be an offline RL algorithm, that takes as input an offline dataset and outputs a policy. Given $\epsilon > 0$ and $\delta \in (0,1)$, $h$ is $(\epsilon, \delta)$-TDP if for any trajectory-neighboring datasets $\mathcal{D}_K$, $\mathcal{D}_{K \setminus \{k\}}$,  and any subset of policies $\Pi$:
\[
    \mathbb{P} \left(h(\mathcal{D}_K) \in \Pi \right) \le e^{\epsilon} \cdot \mathbb{P} \left(h(\mathcal{D}_{K \setminus \{k\}}) \in \Pi \right) + \delta \enspace .
\]
    
\end{definition}

\subsection{Model Learning with Differential Privacy}
\label{sec:dp_model_learning}



Following previous work \citep{yu_mopo_2020, kidambi_morel_2021}, we jointly model the transition dynamics $\hat{P}$ and reward $\hat{r}$ with a Gaussian distribution $\hat{M}$ conditioned on the current state and action. Its mean and covariance are parameterized with neural networks $\theta = \left(\phi, \psi\right)$:
\[
    \hat{M}_{\theta}\left(\Delta_{t}^{t+1}(s), r_t \vert s_t, a_t\right) = \mathcal{N} \left(\mu_{\phi}(s_t, a_t), \Sigma_{\psi}(s_t, a_t)\right) \enspace .
\]
To carry out uncertainty estimation (see Section \ref{sec:private_policy_opt}), we train an ensemble of $N$ models $\hat{M}_{\theta_i}$, $i \in [\![1, N]\!]$, all sharing the same architecture. The core aspect of \textsc{PriMORL}, as illustrated in Figure \ref{fig:offline_mbrl}, is therefore to learn a trajectory-level DP dynamics model ensemble. 

A straightforward approach would be to train each model independently using \textsc{DP-SGD}, but this is inefficient. \textsc{DP-SGD} introduces excessive noise by perturbing gradients at the transition level, harming model performance. Moreover, since all models use the same dataset $\mathcal{D}_K$, the privacy budget scales with $N$, increasing privacy leakage.
A key contribution of our work, developed in Section~\ref{sec:tdp_model_training}, is thus to introduce a training method that 1) ensures privacy guarantees at the trajectory level and 2) efficiently manages the privacy budget across an ensemble of models. 

\subsubsection{Trajectory-level DP Training for Model Ensembles} \label{sec:tdp_model_training}


We identified that the idea behind the DP training method developed in \citet{McMahan_FedAvg_17}, \textsc{DP-FedAvg}, although originally designed to achieve client-level privacy in federated settings, could be effectively adapted for trajectory-level private training in offline RL. Specifically, our training data can be partitioned into trajectories in a manner analogous to how data is partitioned across clients in federated learning. This insight allowed us to leverage this approach to address the unique privacy challenges of our task, adapting it to model ensembles.


We present the resulting training procedure in Algorithm~\ref{alg:dp_model_training}.
The core idea behind \textsc{TDP Model Ensemble Training} is to draw, at each iteration $t$, a random subset $\mathcal{U}_t$ of the $K$ trajectories (line 2).
Each trajectory is drawn with probability $q$, so that the expected number of trajectories selected at each step is $qK$. For each trajectory $\tau_k \in \mathcal{U}_t$, the clipped gradients $\{\Delta_{i, k}^{\text{clipped}}(t)\}$ are then computed from $\tau_k$'s data only (line 3 to 7). Processing and clipping gradients per trajectory is essential to provide trajectory-level privacy, as this ensures that no trajectory will carry more weight than another in the optimization of the model. We later introduce ensemble-adapted clipping strategies to control the privacy budget over model ensembles, ensuring that the sensitivity of the ensemble gradient $\Delta_k^{\text{clipped}}(t) = \left(\Delta_{i, k}^{\text{clipped}}(t)\right)_{i=1}^N$ is bounded by $C$.
We then compute an unbiased estimator of the subset gradient average whose sensitivity is bounded by $C / qK$ (line 8). We can then apply the Gaussian mechanism with magnitude $\sigma = zC/qK$, where $z$ controls the strength of the privacy guarantee $\epsilon$, and update the ensemble model $\theta(t) = \left(\theta_i(t)\right)_{i=1}^N$ with noisy gradient (line 9):
\[
    \theta(t+1) \longleftarrow \theta(t) + \Delta^{\text{avg}}(t) + \mathcal{N}\left(0_{Nd}, \sigma^2 I_{Nd}\right) \enspace .
\]


\begin{algorithm}[ht]
\begin{algorithmic}[1]
\FOR{each iteration $t \in [\![0, T-1]\!]$}
    \STATE $\mathcal{U}_t \leftarrow $ (sample with replacement trajectories from $\mathcal{D}_K$ with prob. $q$)
    \FOR{each trajectory $\tau_k \in \mathcal{U}_t$}
        \STATE Clone current models $\left\{\theta_i^{\text{start}}\right\}_{i=1}^N \leftarrow \left\{\theta_i(t)\right\}_{i=1}^N$
        \STATE $\left\{\theta_{i, k}\right\}_{i=1}^N \leftarrow \text{\textsc{EnsClipGD}}\left(\tau_k, \left\{\theta_i^{\text{start}}\right\}_{i=1}^N; C, \text{local epochs }E, \text{batch size }B\right)$
        \STATE $\Delta^{\text{clipped}}_{i,k}(t) \leftarrow \theta_{i, k} - \theta_i^{\text{start}}, \enspace i=1,...,N$
    \ENDFOR
    \STATE $\Delta^{\text{avg}}_i(t) = \frac{\sum_{k \in \mathcal{U}_t} \Delta_{i, k}^{\text{clipped}}(t)}{q K}, \enspace i=1,...,N$
    \STATE $\theta(t+1) \leftarrow \theta(t) +  \Delta^{\text{avg}}(t) + \mathcal{N}\left(0_{Nd}, \left(\frac{zC}{qK}\right)^2 I_{Nd}\right)$
\ENDFOR
\end{algorithmic}
\caption{\textsc{TDP Model Ensemble Training}}
\label{alg:dp_model_training}
\end{algorithm}

\subsubsection{Privacy Guarantees for the Model} \label{sec:model_privacy_guarantees}

We can now derive formal privacy guarantees for a model trained using Algorithm~\ref{alg:dp_model_training}. A key challenge in our setting arises from training an ensemble of \(N\) models for uncertainty estimation, all using the same dataset \(\mathcal{D}_K\). Treating each model independently, with separate clipping and noise addition, would be inefficient and significantly increase the privacy budget by composition. This could be mitigated by limiting the ensemble size, but at the cost of performance, as shown in \cite{mbrl_uncertainty_2022}.

To address this challenge, we process all the gradients of the model ensemble simultaneously and distribute the global clipping norm $C$ across all models, on the same principle as the per-layer clipping used in \cite{McMahan_FedAvg_17}. Denoting $\Delta_{i, \ell}$ the gradient of layer $\ell$ for model $i$, we propose and experiment with two ensemble clipping strategies: \textbf{Flat Ensemble Clipping}, which clips the whole model gradient $\Delta_i = (\Delta_{i,\ell})_{\ell=1}^L$ with $C_i = C/\sqrt{N}$; and \textbf{Per Layer Ensemble Clipping}, which clips per-layer gradients $\Delta_{i, \ell}$ with $C_{i, \ell} = C /\sqrt{N \times L}$, so that $C = \sqrt{\sum_{i=1}^N C_i^2} = \sqrt{\sum_{i=1}^N \sum_{\ell=1}^L C_{i, \ell}^2}$. For both strategies, we verify that that $\Delta^\text{clipped}_{k} = \left(\Delta^\text{clipped}_{i, k}\right)_{i=1}^K$ has sensitivity bounded by $C$ (see Theorem~\ref{thm:dp_model_privacy}'s proof in appendix), and that the contribution of a given trajectory to the \textit{model ensemble} is appropriately limited. Ensemble clipping eliminates the linear dependence of the privacy budget on the number of models. However, it does entirely remove the negative impact of increasing \(N\). Indeed, for a given noise level, a larger \(N\) requires a smaller clipping threshold \(C_i\) or \(C_{i,\ell}\), which can degrade model convergence by losing too much information from the original gradient. Nevertheless, the clipping threshold scales with the square root of \(N\), mitigating the impact to some extent.

We now formally derive the privacy guarantees for an ensemble of models trained with Algorithm~\ref{alg:dp_model_training}. Mapping users in federating learning to trajectories in offline RL, we can directly adapt Theorem 1 from \citet {McMahanRT018} to state that, with the sensitivity of clipped gradients $\Delta^\text{clipped}_{i,k}$ effectively bounded by $C$, the moments accounting method from \citet{Abadi_2016} computes correctly the privacy loss of Algorithm~\ref{alg:dp_model_training} at trajectory-level for the noise multiplier $z = \sigma / \mathbb{C}$ with $\mathbb{C} = C / qK$. We can therefore use the moments accountant to compute, given $\delta \in (0,1)$, $z>0$, $q \in (0,1)$ and $T \in \mathbb{N}$, the total privacy budget $\epsilon$ spent by Algorithm \ref{alg:dp_model_training}, and obtain $(\epsilon, \delta)$-TDP guarantees for our dynamics model, as stated in Theorem~\ref{thm:dp_model_privacy} (full proof in appendix).

\begin{restatable}{theorem}{DPModelPrivacy}\label{thm:dp_model_privacy} \emph{$(\epsilon, \delta)$-TDP guarantees for dynamics model}.
Given $\delta \in (0,1)$, noise multiplier $z$, sampling ratio $q$ and number of training iterations $T$, let $\epsilon := \epsilon^{\text{MA}}\left(z, q, T, \delta\right)$ be the privacy budget computed by the moments accounting method from \citep{Abadi_2016}. The dynamics model output by Algorithm \ref{alg:dp_model_training} is $(\epsilon, \delta)$-TDP.
\end{restatable}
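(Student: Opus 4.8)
The plan is to reduce the trajectory-level privacy analysis of Algorithm~\ref{alg:dp_model_training} to a single application of the subsampled Gaussian mechanism composed over $T$ rounds, exactly mirroring the client-level analysis of \textsc{DP-FedAvg}. The correspondence to exploit is that a trajectory $\tau_k$ plays the role of a client: removing $\tau_k$ from $\mathcal{D}_K$ affects the update only through the single term $\Delta^{\text{clipped}}_k(t)$ in the sum of line 8. First I would fix a pair of trajectory-neighboring datasets $\mathcal{D}_K$, $\mathcal{D}_{K \setminus \{k\}}$ as in Definition~\ref{def:tdp} and argue that the entire dependence of the algorithm's output on the presence of $\tau_k$ is mediated by the sequence of noisy averaged updates $\Delta^{\text{avg}}(t) + \mathcal{N}(0_{Nd}, \sigma^2 I_{Nd})$, since the model iterates $\theta(t)$ are a deterministic function of these updates and of the data-independent initialization.

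The core quantitative step is the per-round sensitivity bound. I would show that, under either clipping strategy, the stacked ensemble gradient $\Delta^{\text{clipped}}_k = (\Delta^{\text{clipped}}_{i,k})_{i=1}^N$ satisfies $\lVert \Delta^{\text{clipped}}_k \rVert_2 \le C$. For \textbf{Flat Ensemble Clipping}, each model's contribution obeys $\lVert \Delta^{\text{clipped}}_{i,k} \rVert_2 \le C_i = C/\sqrt{N}$, so $\lVert \Delta^{\text{clipped}}_k \rVert_2^2 \le \sum_{i=1}^N C_i^2 = C^2$; for \textbf{Per Layer Ensemble Clipping}, summing $\lVert \Delta^{\text{clipped}}_{i,\ell} \rVert_2^2 \le C_{i,\ell}^2 = C^2/(NL)$ over the $NL$ layer-model pairs yields the same bound. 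Because changing a single trajectory alters the numerator of $\Delta^{\text{avg}}(t)$ by at most one such term, the $L_2$-sensitivity of $\Delta^{\text{avg}}(t)$ is at most $\mathbb{C} = C/qK$. Consequently line 9 is precisely the Gaussian mechanism applied to $\Delta^{\text{avg}}(t)$ with noise multiplier $z = \sigma/\mathbb{C}$.

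With the sensitivity established, I would invoke the moments accounting method of \citet{Abadi_2016}: each round is a subsampled Gaussian mechanism with sampling ratio $q$ and multiplier $z$, so its log-moments are controlled by the standard sampled-Gaussian moment bound, and the moments accumulate additively across the $T$ rounds. Converting the accumulated moment bound to an $(\epsilon, \delta)$ statement via the moment-to-tail conversion yields exactly $\epsilon = \epsilon^{\text{MA}}(z, q, T, \delta)$, which is the content of Theorem~1 of \citet{McMahanRT018} once users are relabeled as trajectories. Finally, since the released dynamics model is obtained from the privatized iterates by a data-independent map, immunity to post-processing preserves the guarantee, establishing $(\epsilon, \delta)$-TDP.

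The step I expect to require the most care is verifying that the trajectory-neighboring relation of Definition~\ref{def:tdp} aligns correctly with the add/remove-one-client notion underlying both the moments accountant and the amplification-by-subsampling bound. Concretely, one must confirm that gradient computation and clipping are carried out strictly per trajectory (so each trajectory enters the sum exactly once, independently of the others) and that the sampling in line 2 is a genuine Poisson/Bernoulli inclusion with rate $q$, which is the sampling model assumed by the sampled-Gaussian moment bound. Once this alignment is confirmed, the sensitivity algebra and the composition are routine.
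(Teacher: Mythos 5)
Your proposal is correct and takes essentially the same route as the paper's proof: both reduce Algorithm~\ref{alg:dp_model_training} to the client-level \textsc{DP-FedAvg} analysis (Theorem~1 of \citet{McMahanRT018}, i.e., the moments accountant of \citet{Abadi_2016} applied to the $T$-fold composition of the subsampled Gaussian mechanism) by mapping clients to trajectories, and both verify the key sensitivity condition $\lVert \Delta^{\text{clipped}}_k \rVert_2 \le C$ via the identical algebra $\sqrt{\sum_{i=1}^N C_i^2} = C$ for flat clipping and its per-layer analogue. Your extra touches --- writing out the per-layer case rather than asserting it, the closing post-processing remark, and flagging the Poisson-sampling/neighboring-relation alignment --- merely make explicit what the paper treats implicitly.
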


\subsection{Policy Optimization under a Private Model}
\label{sec:private_policy_opt}

Now that we learned a private model $\hat{M}$ from offline data, we use it as a simulator of the environment to learn a private policy $\hat{\pi}$ with a model-based policy optimization approach. The use of a private model and the privacy constraints on the end policy introduce additional challenges compared to the non-private case, as demonstrated in Section~\ref{sec:impact_privacy_policy_opt}. We study solutions to mitigate the detrimental effects of private training on policy performance in Section~\ref{sec:solutions_private_privacy_opt}, before deriving formal privacy guarantees for a policy learned under a private model in Section~\ref{subsubsec:policy_opt}.

\subsubsection{Impact of Privacy on Policy Optimization} \label{sec:impact_privacy_policy_opt}

It is first essential to examine the complexities of policy optimization in model-based offline RL and assess whether they are amplified in the private setting.
A major challenge in model-based offline RL is to handle the discrepancy between the true and the learned dynamics when optimizing the policy. Indeed, model inaccuracies cause errors in policy evaluation that may be exploited, resulting in poor performance in the real environment. According to the Simulation Lemma \citep{Kearns2002, Xu2020}, the value evaluation error of a policy $\pi$ in model-based RL can be decomposed into a \textit{model error} term and a \textit{policy distribution shift} term. Formally, denoting $\rho_{\pi^B}$ the state-action discounted occupancy measure of the data-collection policy $\pi^B$, if the model error is bounded as $\mathbb{E}_{(s,a) \sim \rho_{\pi^B}}\left[D_{KL}\left(P(\cdot \vert s,a) \Vert \hat{P}(\cdot \vert s,a) \right) \right] \le \varepsilon_m$ and the distribution shift is bounded as $\max_{s} D_{KL} \left(\pi(\cdot \vert s) \Vert \pi^B(\cdot \vert s))\right) \le \varepsilon_p \enspace$, then the value evaluation error of $\pi$ is bounded as:
\begin{equation} \label{eq:simulation_lemma}
    \vert \hat{V}^\pi - V^\pi \vert \le \frac{\sqrt{2} \gamma}{(1 - \gamma)^2} \sqrt{\epsilon_m} + \frac{2 \sqrt{2}}{(1 - \gamma)^2} \sqrt{\varepsilon_p} \enspace,
\end{equation}
where $\hat{V}^\pi$ and $V^\pi$ denote the value of $\pi$ under the learned and the true dynamics, respectively.

Under some assumptions regarding the model loss function, we can refine the model error term in terms of the size $N$ of the training dataset, as stated in Proposition\ref{thm:perf_policy_non_private}.

\begin{restatable}{proposition}{PerfPolicyNonPrivate} \label{thm:perf_policy_non_private}
    \emph{Value evaluation error in non-private MBRL.} Let the model loss function be $L$-Lipschitz and $\Delta$-strongly convex, and assumptions from the simulation lemma hold. There are a stochastic convex optimization algorithm for learning the model and a constant $M$ such that, with probability at least $1-\alpha$, and for sufficiently large $N$, the value evaluation error of $\hat{\pi}$ is bounded as:
    \[
         \vert \hat{V}^{\hat{\pi}} - V^{\hat{\pi}} \vert \le \frac{\sqrt{2} \gamma}{(1 - \gamma)^2} \cdot M \cdot \frac{L \log^{1/2}(N / \alpha)}{\sqrt{\Delta N}} + \frac{2 \sqrt{2} \gamma}{(1 - \gamma)^2} \sqrt{\varepsilon_p} \enspace .
    \]
\end{restatable}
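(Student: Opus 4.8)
The plan is to read the statement as a refinement of the Simulation Lemma (\Eqref{eq:simulation_lemma}) in which the generic model-error quantity $\varepsilon_m$ is replaced by an explicit statistical learning rate in the dataset size $N$. Since the distribution-shift term is left untouched, the whole argument reduces to controlling $\sqrt{\varepsilon_m}$, where $\varepsilon_m$ upper bounds the expected KL model error $\mathbb{E}_{(s,a)\sim\rho_{\pi^B}}[D_{KL}(P(\cdot\vert s,a)\Vert\hat{P}(\cdot\vert s,a))]$. First I would make explicit that the model is fitted by minimizing an empirical risk (the empirical average of a per-sample loss over the $N$ transitions), so that model learning becomes a stochastic optimization problem whose objective is, by hypothesis, $L$-Lipschitz and $\Delta$-strongly convex.

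The second step is to identify the expected KL model error with the excess risk of this optimization problem. Taking the per-sample loss to be the negative log-likelihood and assuming a well-specified model class (the true dynamics $P$ lie in the class, so the population minimizer coincides with $P$), the excess population risk of any candidate $\hat{P}$ equals exactly $\mathbb{E}_{(s,a)\sim\rho_{\pi^B}}[D_{KL}(P\Vert\hat{P})]$; hence bounding $\varepsilon_m$ is the same as bounding the excess risk of the learned model. This is the step I expect to carry the main conceptual weight: one must pin down the loss so that its excess risk genuinely controls the KL divergence while simultaneously satisfying the Lipschitz and strong-convexity hypotheses on the relevant parameter domain.

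The third step invokes a high-probability excess-risk guarantee for stochastic convex optimization with an $L$-Lipschitz, $\Delta$-strongly convex objective. Running a suitable algorithm (e.g.\ projected SGD with tail/suffix averaging, or the averaged last iterate) yields, for sufficiently large $N$ and with probability at least $1-\alpha$, a bound of the form $\varepsilon_m \le M^2 L^2 \log(N/\alpha)/(\Delta N)$, where the constant $M$ absorbs the universal numerical factors of the SCO analysis. The ``sufficiently large $N$'' clause is precisely what makes the leading $1/(\Delta N)$ term dominate the lower-order contributions in such high-probability bounds, and the $\log(N/\alpha)$ factor is the standard cost of converting an in-expectation rate into a high-probability one (via a martingale concentration or union-bound argument enabled by strong convexity).

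Finally I would take square roots to obtain $\sqrt{\varepsilon_m}\le M\,L\,\log^{1/2}(N/\alpha)/\sqrt{\Delta N}$ and substitute this into the model-error term of \Eqref{eq:simulation_lemma}, leaving the distribution-shift term $\sqrt{\varepsilon_p}$ unchanged, which gives exactly the claimed bound. The principal obstacle is not this final substitution but the bridge in the second and third steps: correctly casting the KL model error as the excess risk of a convex program, and importing an SCO high-probability bound whose form matches the asserted $\log(N/\alpha)$ dependence. The convexity and Lipschitz assumptions are what legitimize this importation, at the price of idealizing the genuinely non-convex neural-network training used in practice.
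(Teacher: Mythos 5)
Your proposal follows essentially the same route as the paper's own proof: you cast the expected KL model error as the excess population risk of the negative log-likelihood under a well-specified model class, import the high-probability $\mathcal{O}\left(L^2 \log(N/\alpha)/(\Delta N)\right)$ excess-risk guarantee for $L$-Lipschitz, $\Delta$-strongly convex stochastic optimization (the paper invokes the bound of Shalev-Shwartz et al., 2009, for exactly this purpose), take square roots, and substitute into the model-error term of the Simulation Lemma while leaving the distribution-shift term $\sqrt{\varepsilon_p}$ untouched. The argument is correct and matches the paper's proof in all essential steps, including the role of the sufficiently-large-$N$ clause and the logarithmic cost of the high-probability conversion.
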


When we learn the model with differential privacy, we disrupt model convergence because of gradient clipping and noise. This likely results in a less accurate dynamics model (although it may help prevent overfitting in some cases) and increased value evaluation error. Intuitively, DP training impacts both error terms in \ref{eq:simulation_lemma}. Model error is affected as a direct result of gradient perturbations: \cite{BassilyST14}, in particular, shows that noisy gradient descent (GD) has increased excess risk compared to non-private GD. The effect of DP training on \textit{distribution shift} is more ambiguous, but as privacy intuitively aims at preventing overfitting particular data points, it is therefore likely to drive $\hat{\pi}$ away from the offline data, hence increasing the divergence between $\hat{\pi}$ and $\pi^B$. In the simpler case where the model is trained with a vanilla DP noisy GD algorithm, denoting $\varepsilon_p^{DP}$ the upper bound on $\max_{s} D_{KL} \left(\hat{\pi}^{DP}(\cdot \vert s) \Vert \pi^B(\cdot \vert s))\right)$, Proposition~\ref{thm:perf_policy} states the private value evaluation error.

\begin{restatable}{proposition}{PerfPolicy} \label{thm:perf_policy}
    \emph{Value evaluation error in private MBRL.} Let assumptions from Proposition~\ref{thm:perf_policy_non_private} hold. If the model is learned with $(\epsilon, \delta)$-DP gradient descent, then, with probability at least $1-\alpha$, there is a constant $M^\prime$ such that for large enough $N$, the value evaluation error of  $\hat{\pi}^{DP}$ is bounded as:
    \[
         \vert \hat{V}_\text{DP}^{\hat{\pi}} - V^{\hat{\pi}} \vert \le \frac{\sqrt{2} \gamma}{(1 - \gamma)^2} \cdot M^\prime \cdot \frac{L d^{1/4} \log(N / \delta) \cdot \text{poly} \log(1 / \alpha)}{\sqrt{\Delta N \epsilon \alpha}} + \frac{2 \sqrt{2} \gamma}{(1 - \gamma)^2} \sqrt{\varepsilon_p^{DP}} \enspace .
    \]
\end{restatable}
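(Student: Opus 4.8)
The plan is to follow the same two-term decomposition as in Proposition~\ref{thm:perf_policy_non_private}, starting from the Simulation Lemma bound \eqref{eq:simulation_lemma}, and to replace only the quantities that private training affects: the model error $\varepsilon_m$ and the distribution-shift bound $\varepsilon_p$. For the distribution-shift term, I would simply substitute $\varepsilon_p^{DP}$, the upper bound on $\max_s D_{KL}(\hat{\pi}^{DP}(\cdot \vert s) \Vert \pi^B(\cdot \vert s))$, which by definition controls the second term of \eqref{eq:simulation_lemma} when the policy is optimized under the privately learned model; this requires no new argument beyond the relabeling $\varepsilon_p \mapsto \varepsilon_p^{DP}$. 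The substance of the proof is therefore to re-derive the model-error term under $(\epsilon,\delta)$-DP training.

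First I would invoke an excess-risk bound for private stochastic convex optimization. Since the model loss is assumed $L$-Lipschitz and $\Delta$-strongly convex, running $(\epsilon,\delta)$-DP noisy gradient descent yields (e.g.\ \citep{BassilyST14, KangLLW23}) an expected excess population risk whose leading privacy term scales like $\tfrac{L^2 \sqrt{d}\, \log^2(N/\delta)\, \mathrm{poly}\log(1/\alpha)}{\Delta N \epsilon}$, where the extra $\sqrt{d}$ factor, absent from the non-private bound, arises from the Gaussian perturbation of the $d$-dimensional gradients. Identifying the loss with the negative log-likelihood of the Gaussian dynamics model, the excess population risk coincides (under the realizability conditions behind \eqref{eq:simulation_lemma}) with the expected divergence $\mathbb{E}_{(s,a)\sim\rho_{\pi^B}}[D_{KL}(P \Vert \hat{P})]$, so this excess-risk bound is precisely a bound on $\mathbb{E}[\varepsilon_m^{DP}]$.

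Next I would convert the expectation bound into a high-probability statement. Applying Markov's inequality to the nonnegative excess risk shows that, with probability at least $1-\alpha$, the quantity $\varepsilon_m^{DP}$ is at most $\alpha^{-1}$ times its expectation, which accounts for the $\alpha$ appearing under the square root in the target bound. Substituting this into the first term of \eqref{eq:simulation_lemma}, taking the square root (so that $\varepsilon_m^{DP} \sim \tfrac{\sqrt{d}}{N\epsilon}$ becomes $\sqrt{\varepsilon_m^{DP}} \sim \tfrac{d^{1/4}}{\sqrt{N\epsilon}}$), and absorbing the problem-independent constants into $M'$ yields $\tfrac{\sqrt{2}\gamma}{(1-\gamma)^2} M' \tfrac{L d^{1/4}\log(N/\delta)\,\mathrm{poly}\log(1/\alpha)}{\sqrt{\Delta N \epsilon \alpha}}$; combining with the substituted distribution-shift term gives the claim.

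The hard part will be the middle step: securing a private excess-risk guarantee whose dependence on $d$, $N$, $\epsilon$, and $\delta$ matches the stated rate, and cleanly tying the loss excess risk to the KL model error $\varepsilon_m^{DP}$. In particular, the exact $\sqrt{d}$ dependence and the $\mathrm{poly}\log$ factors require selecting the right variant of the DP stochastic convex optimization bound (in-expectation versus high-probability, empirical versus population, convex versus strongly convex) and verifying that the smoothness and realizability hypotheses needed to equate negative-log-likelihood excess risk with expected KL actually hold for the Gaussian model class; once that bound is fixed, the remainder is a transcription of the non-private argument of Proposition~\ref{thm:perf_policy_non_private} with $\varepsilon_m \mapsto \varepsilon_m^{DP}$ and $\varepsilon_p \mapsto \varepsilon_p^{DP}$.
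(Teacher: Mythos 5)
Your proposal follows essentially the same route as the paper's proof: the Simulation Lemma decomposition, the identification (under the realizability assumption $P \in \mathcal{F}$ with $P$ minimizing the population risk) of the negative-log-likelihood excess population risk with $\mathbb{E}_{(s,a)\sim\rho_{\pi^B}}\left[D_{\mathrm{KL}}\left(P \,\Vert\, \hat{P}\right)\right]$, the private strongly-convex excess-risk bound of \citet{BassilyST14}, the square-root conversion giving the $d^{1/4}/\sqrt{N\epsilon\alpha}$ rate, and the relabeling $\varepsilon_p \mapsto \varepsilon_p^{DP}$ for the distribution-shift term. The only cosmetic difference is that you reconstruct the high-probability form via Markov's inequality from an in-expectation bound, whereas the paper directly invokes Theorem F.2 of \citet{BassilyST14}, which is already stated with high probability and carries the $1/\alpha$ factor.
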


Comparing the value evaluation errors in Propositions~\ref{thm:perf_policy_non_private} and \ref{thm:perf_policy} (both proven in appendix), we observe how DP training may degrade performance in MBRL. On the model error term, the private bound has an explicit dependence on the problem dimension $d$ which is not present in the non-private bound, and the $\sqrt{\epsilon}$ factor in the denominator shows that the error will degrade with strong privacy guarantees. On the distribution shift term, $\varepsilon_p^{DP}$ is possibly greater than $\varepsilon_p$, as discussed above. Overall, DP training thus reduces the reliability of policy evaluation, leading to poorer policy performance in the true environment. This effect intensifies as privacy guarantees become stricter.

\subsubsection{Mitigating Private Model Uncertainty} \label{sec:solutions_private_privacy_opt}

Building on the previous analysis, we now study how to handle model uncertainty in the private model-based setting.
In the non-private case, the discrepancy between the true and the learned dynamics is typically handled by penalizing the reward with a measure of the uncertainty of the model, denoted $u: \mathcal{S} \times \mathcal{A} \rightarrow \mathbb{R}_+$. Therefore, if the model is believed to be unreliable at a given state-action pair $(s,a)$ (\textit{i.e.}, large $u(s,a)$), the possibly over-estimated reward will be corrected as:
\begin{equation} \label{eq:reward_penalty}
    \Tilde{r}(s,a) = \hat{r}(s,a) - \lambda \cdot u(s,a) \enspace ,
\end{equation}
where $\lambda$ is an hyperparameter. 
The policy is the optimized under the resulting pessimistic MDP $\Tilde{\mathcal{M}} = (\mathcal{S}, \mathcal{A}, \hat{P}, \Tilde{r}, \gamma, \rho_0)$.
\textsc{MOPO} \citep{yu_mopo_2020}, \textsc{MOReL} \citep{kidambi_morel_2021} and more recently \textsc{Count-MORL} \citep{kim_count_2023} achieve impressive results on traditional offline RL benchmarks with this approach, using different heuristics to estimate model uncertainty.

\cite{mbrl_uncertainty_2022}, which study design choices in offline model-based RL and the properties of different uncertainty estimators, find that the uncertainty measures proposed in the literature have a good correlation to model error, more so than with the distribution shift. Even if we suspect that it may negatively impact distribution shift, the obvious and quantifiable effect of DP training is on the dynamics error. Therefore, we believe that existing measures are appropriate for mitigating the increase in dynamics error from DP training. In particular, we consider the maximum aleatoric uncertainty $u_{\text{MA}}(s,a) = \max_{i \in [\![1, N]\!]} \Vert \Sigma_{\psi_i}(s,a) \Vert_F$ \citep{yu_mopo_2020} and the maximum pairwise difference $u_\text{MPD}(s,a) = \max_{i, j \in [\![1, N]\!} \Vert \mu_{\phi_i}(s,a) - \mu_{\phi_j}(s,a) \Vert_2$ \citep{kidambi_morel_2021}. We compare both estimators (see Figure~\ref{fig:diff_estimators} in the appendix) and find that neither is consistently superior. However, we observe that the choice of estimator can affect performance on a specific task.
In addition, it seems reasonable to increase the reward penalty $\lambda$ compared to the non-private case to take into account the greater uncertainty. If we cannot quantify how much we should increase $\lambda$, we try different values empirically and observe that if a too large $\lambda$ hurt policy performance, a moderate increase indeed provide better results.


\subsubsection{Private Policy Optimization}
\label{subsubsec:policy_opt}

Given a choice of uncertainty estimator $u \in \{u_\text{MA}, u_\text{MPD}\}$, we now consider optimizing the policy within the pessimistic private MDP $\Tilde{\mathcal{M}} = (\mathcal{S}, \mathcal{A}, \hat{P}, \Tilde{r}_u, \gamma, \rho_0)$, with $\Tilde{r}_u = \hat{r}(s,a) - \lambda \cdot u(s,a)$.
We use Soft Actor-Critic (SAC, \citet{Haarnoja_SAC_2018})\footnote{This could be any model-based policy optimization or planning algorithm that does not use offline data.}, a classic off-policy algorithm with entropy regularization, to learn the policy from $\Tilde{\mathcal{M}}$, in line with existing approaches in the offline MBRL literature.
Offline model-based methods typically mix real offline data from $\mathcal{D}_K$ with model data during policy learning (in MOPO, for instance, each batch contains 5\% of real data). Here, however, we learn the policy exclusively from model data to avoid incurring privacy loss beyond what is needed to train the model, and thus control the privacy guarantees. Algorithm \ref{alg:sac_training} in appendix provides a pseudo-code for SAC policy optimization in the pessimistic private MDP. 
Using the post-processing property of DP, we can now state in Theorem~\ref{thm:dp_policy_privacy} that, given the $(\epsilon, \delta)$-TDP model $\hat{M} = (\hat{P}, \hat{r})$ learned as described in Section \ref{sec:dp_model_learning}, the policy learned with Algorithm~\ref{alg:sac_training} under $\Tilde{M}$ is also $(\epsilon, \delta)$-TDP. The full proof of this theorem is provided in appendix.

\begin{restatable}{theorem}{DPPolicyPrivacy}\label{thm:dp_policy_privacy}\emph{$(\epsilon, \delta)$-TDP guarantees for \textsc{PriMORL}}.
    Given an $(\epsilon, \delta)$-TDP model $\left(\hat{P}, \hat{r}\right)$ learned with Algorithm \ref{alg:dp_model_training}, the policy obtained with private policy optimization (Algorithm~\ref{alg:sac_training}) within the pessimistic model $\left(\hat{P}, \hat{r} - \lambda u\right)$ is $(\epsilon, \delta)$-TDP.
\end{restatable}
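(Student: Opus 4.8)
The plan is to prove Theorem~\ref{thm:dp_policy_privacy} as a direct consequence of the immunity of differential privacy to post-processing, by treating the entire policy-optimization stage as a data-independent (though possibly randomized) map applied to the output of Algorithm~\ref{alg:dp_model_training}. The argument contains no inequalities to manipulate; the whole content is to exhibit the correct composition and to certify that the second factor never reopens the dataset.

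First I would set up the composition explicitly. Let $h$ denote the model-learning mechanism of Algorithm~\ref{alg:dp_model_training}, mapping the offline dataset $\mathcal{D}_K$ to the private model $\hat{M} = (\hat{P}, \hat{r})$; by Theorem~\ref{thm:dp_model_privacy}, $h$ is $(\epsilon, \delta)$-TDP. Let $g$ denote the policy-optimization stage of Algorithm~\ref{alg:sac_training}, mapping a model to a policy. The full \textsc{PriMORL} pipeline is the composition $g \circ h$, so by the post-processing property stated in Section~\ref{sec:diff_privacy} it suffices to show that $g$ is data-independent, i.e. that it depends on $\mathcal{D}_K$ only through $\hat{M}$.

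Second I would verify this data-independence in two sub-steps. (i) The construction of the pessimistic MDP $\Tilde{\mathcal{M}} = (\mathcal{S}, \mathcal{A}, \hat{P}, \hat{r} - \lambda u, \gamma, \rho_0)$ is a deterministic function of $\hat{M}$ alone: the penalized reward depends on the data only through $\hat{r}$ and through the uncertainty estimator $u \in \{u_{\mathrm{MA}}, u_{\mathrm{MPD}}\}$, and both estimators are computed entirely from the ensemble parameters $(\mu_{\phi_i}, \Sigma_{\psi_i})_{i=1}^N$, while $\mathcal{S}, \mathcal{A}, \gamma, \rho_0, \lambda$ are fixed, data-independent quantities. (ii) The SAC optimization is run exclusively on synthetic rollouts generated from $\Tilde{\mathcal{M}}$; here I would invoke the design choice emphasized in Section~\ref{subsubsec:policy_opt} that no real offline transitions are mixed into the policy-learning batches, so that every gradient, critic update, and exploration sample depends on $\mathcal{D}_K$ only through $\hat{M}$. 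All remaining randomness in $g$ (network initialization, replay-buffer sampling, entropy-driven exploration) is drawn independently of the data.

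Finally, having established that $g$ is a data-independent randomized map, I would apply the post-processing property: since $h$ is $(\epsilon, \delta)$-TDP and $g$ is data-independent, $g \circ h$ is $(\epsilon, \delta)$-TDP, which is exactly the claim. The main obstacle is not analytical but one of careful bookkeeping in sub-step (ii): one must confirm that no stage of Algorithm~\ref{alg:sac_training} queries the dataset, since this is precisely why the authors deviate from the standard offline-MBRL practice of blending in a fraction of real transitions. Any such leakage would invalidate the post-processing argument and instead force an explicit, and weaker, composition bound. I would therefore state the ``no real-data'' property as an explicit hypothesis and trace it through the pseudocode of Algorithm~\ref{alg:sac_training} to close the proof.
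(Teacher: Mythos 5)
Your proposal is correct and takes essentially the same route as the paper's own proof: both arguments verify that the pessimistic MDP is a data-independent function of the private model (since $u_{\mathrm{MA}}$ and $u_{\mathrm{MPD}}$ are computed solely from the ensemble parameters $(\mu_{\phi_i}, \Sigma_{\psi_i})_{i=1}^N$), note that SAC in Algorithm~\ref{alg:sac_training} consumes only model-generated rollouts and never touches $\mathcal{D}_K$, and conclude via the post-processing property of differential privacy. The only cosmetic difference is that the paper applies post-processing twice (first to certify the pessimistic model $\Tilde{M}$ as $(\epsilon,\delta)$-TDP, then to the policy-optimization map $h_\Pi$), whereas you fold both steps into a single data-independent map $g$ --- the two bookkeepings are equivalent.
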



\section{Experiments}
\label{sec:exps}

We empirically assess \textsc{PriMORL} in three continuous control tasks: \textsc{CartPole-Balance} and \textsc{CartPole-Swingup} from the DeepMind Control Suite \citep{DM_Control_Suite} as well as \textsc{Pendulum} from OpenAI's Gym \citep{openai_gym}. We also conduct experiments on \textsc{HalfCheetah} \citep{halfcheetah}, which we present in appendix (Section~\ref{sec:exps_halfcheetah}). For simplicity, we refer to \textsc{CartPole-Balance} and \textsc{CartPole-Swingup} as \textsc{Balance} and \textsc{Swingup}.

\subsection{Experimental Setting}

Following common practice, we evaluate the offline policies by running them in the real environment 
We aim to assess the policy's performance degradation when varying the privacy level, as DP training may negatively affect it. 
We consider \textsc{MOPO} as our non-private baseline. For \textsc{PriMORL}, we consider different configurations outlined in Table~\ref{tab:dp_morl_config}. The \textsc{No Privacy} variant, without noise ($z=0$), isolates the impact of trajectory-level model training on performance.
The two private variants ($\epsilon < \infty$):  \textsc{PriMORL Low} and \textsc{PriMORL High} correspond to different noise multipliers. We detail the choice of privacy parameters in appendix (Section~\ref{sec:detail_exps}). As the existing \textsc{SwingUp} offline benchmark from \citet{RL_unplugged} is very small ($K=40$), and DP training of ML models typically requires significantly more data compared to non-private training (see, for instance, \citet{ponomareva_how_2023}, and our discussion in appendix, Section~\ref{sec:app_price_privacy}), we build our own dataset with $30$k trajectories (\textit{i.e.}, $30$M steps). We follow the same approach for \textsc{Balance} and \textsc{Pendulum} for which we are not aware of any existing offline benchmark. Data collection, which we detail in appendix (Section~\ref{sec:app_data_collection}), follows the philosophy of standard benchmarks like \textsc{D4RL} \citep{fu2020d4rl}.

\begin{figure*}[t]
\centering
\begin{minipage}{.31\textwidth}
\includegraphics[width=\linewidth]{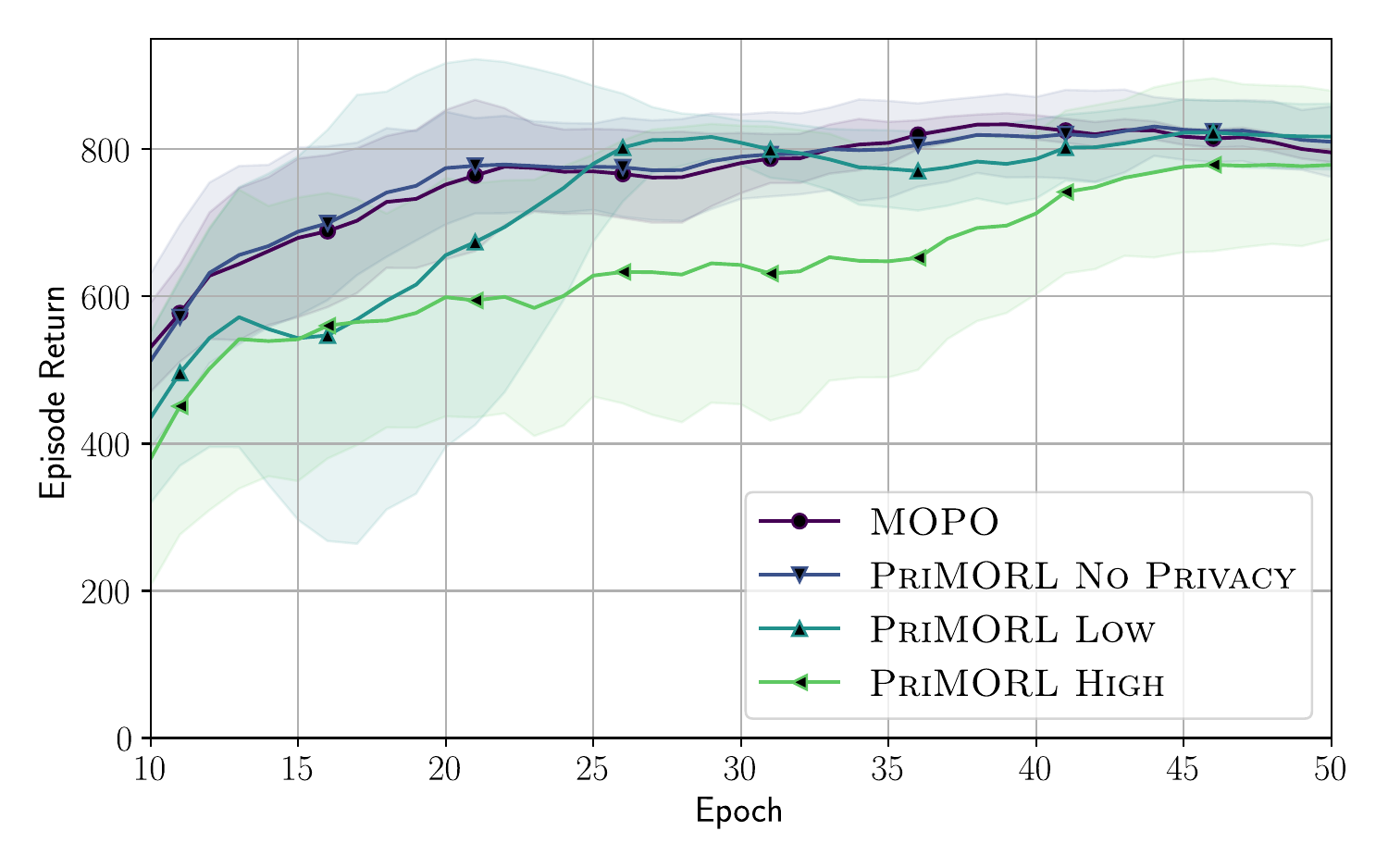}
\end{minipage}
\begin{minipage}{.32\textwidth}
\includegraphics[width=\linewidth]{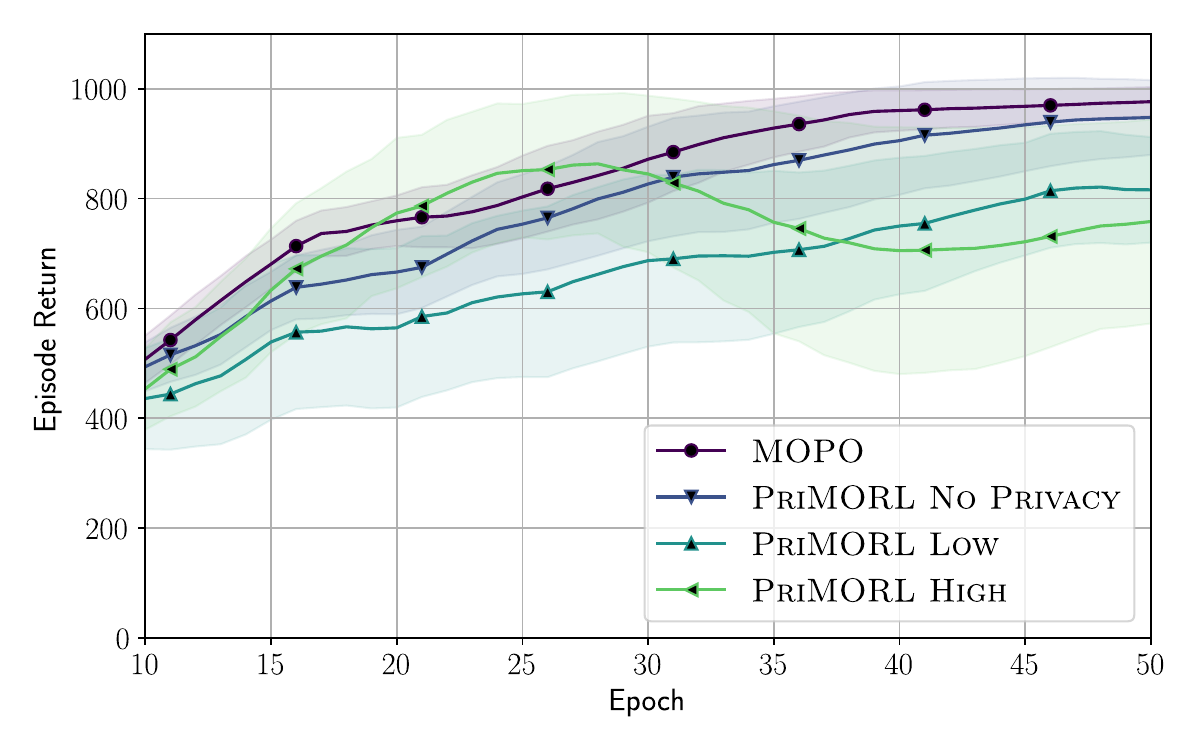}
\end{minipage}
\begin{minipage}{.32\textwidth}
\includegraphics[width=\linewidth]{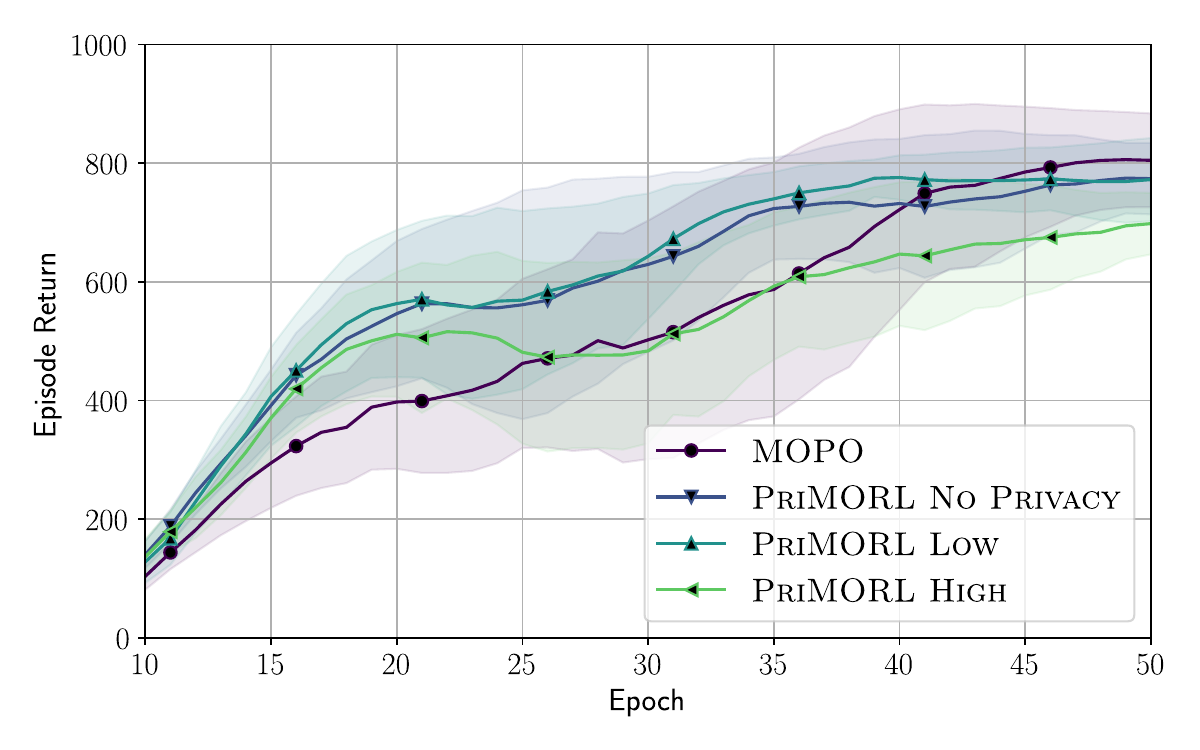}
\end{minipage}
\vspace{-2pt}
\caption{Learning curves on \textsc{Pendulum} (\textit{left}), \textsc{Balance} (\textit{middle}) and \textsc{Swingup} (\textit{right}).}
\label{fig:main_plots}
\end{figure*}

\subsection{Main Results}







We present results on \textsc{Balance}, \textsc{SwingUp} and \textsc{Pendulum} for \textsc{PriMORL} and baselines in Table~\ref{tab:cartpole_results} and Figure~\ref{fig:main_plots}. Both report policy performance in the real MDP as the mean episodic return over 10 episodes per SAC training epoch. Average performance and 95\% confidence intervals are computed by re-training the model and the policy from scratch on at least 5 random seeds to assess the stability of the full training process.
Based on preliminary results, we use \textit{flat clipping} for both \textsc{Cartpole} tasks and \textit{per-layer clipping} for \textsc{Pendulum}. During policy optimization, \textsc{Swingup} uses $u_\text{MA}$ to estimate uncertainty while others use $u_\text{MPD}$. 
These results show a well-expected trade-off: performance tends to degrade with stronger privacy guarantees (\textit{i.e.}, smaller $\epsilon$'s), as the model training gets perturbed with higher levels of noise. Moreover, private model training makes the policy performance less stable over several runs, which is also expected since differential privacy adds another source of randomness during training. We notice that noise is not the sole factor that negatively impacts performance, as suggested by the gap between \textsc{MOPO} and the \textsc{PriMORL No Privacy}: gradient clipping and trajectory-level training, also contribute to performance degradation. In some cases, a small amount of DP noise might actually be beneficial, acting as a kind of regularization, as in \textsc{Swingup} and \textsc{Pendulum}. Moreover, experiments on \textsc{HalfCheetah} (Section~\ref{sec:exps_halfcheetah}) show that \textsc{PriMORL} performs worse in higher-dimensional tasks. This could be expected based on the theoretical analysis led in Section~\ref{sec:impact_privacy_policy_opt}, as DP training adds a dependence on the dimension $d$ of the task in the valuation gap.

Despite this trade-off, private agents trained with \textsc{PriMORL} remain competitive with \textsc{MOPO} for $\epsilon$ in the $10^1$ to $10^2$ range. For \textsc{Pendulum}, we plot policy performance against $\epsilon$ (Figure~\ref{fig:pendulum_f_eps} in appendix) and observe even no performance degradation until $\epsilon$ reaches the $1$ to $10$ range. While such $\epsilon$'s do not provide strong theoretical DP guarantees, they can provide adequate privacy protection in practical offline RL applications, as pointed out in \citet{ponomareva_how_2023} and backed by recent work on empirical privacy auditing (\textit{e.g.}, \cite{Carlini0EKS19, PonomarevaBV22}). In offline RL especially, the definition of DP assumes the adversary only has to discriminate between two precise neighboring datasets $D$ and $D^\prime = D \cup \{\tau\}$ as well as the release of all gradients and strong assumptions about the adversary, whereas in practice the adversary faces the much harder task of reconstructing a high-dimensional trajectory based on the output policy and limited side information only. We also point out that achieving a strong privacy-utility trade-off in offline RL requires access to datasets with a very large number of trajectories and that current benchmarks, with datasets of only dozens to thousands of trajectories, are insufficient for studying privacy effectively. In contrast, other fields often use datasets containing millions of users to ensure robust privacy guarantees, which would be very costly to study in offline RL. As an example, \citet{McMahanRT018} consider datasets with $10^6$ to $10^9$ users to train DP recurrent language models, and this is arguably the main reason why they achieve formal strong privacy guarantees. We discuss this in depth in appendix (Section~\ref{sec:app_price_privacy}) and provide both theoretical and practical evidence that increasing dataset size improves the privacy-performance trade-off for \textsc{PriMORL}, demonstrating even greater potential for our approach.

\begin{table}[t]
\tiny
\caption{Results for \textsc{Pendulum}, \textsc{Balance} and \textsc{Swingup}.}
\label{tab:cartpole_results}
\vskip 0.15in
\begin{center}
\begin{sc}
\begin{tabular}{l@{\hskip 15pt}cc@{\hskip 15pt}cc@{\hskip 15pt}cc}
\toprule
& \multicolumn{2}{c}{\textsc{Pendulum}} & \multicolumn{2}{c}{\textsc{CartPole-Balance}} & \multicolumn{2}{c}{\textsc{CartPole-Swingup}} \\
\midrule
Method & $\epsilon$ & Return & $\epsilon$ & Return & $\epsilon$ & Return \\
\midrule
MOPO & $\infty$ & $795.9 \pm 6.5$ & $\infty$ & $976.3 \pm 26.8$ & $\infty$ & $804.9 \pm 89.6$ \\
PriMORL No Priv.  & $\infty$ & $810.4 \pm 27.5 \textbf{ (101.8\%)}$ & $\infty$ & $947.5 \pm 68.3 \textbf{ (97.1\%)}$ & $\infty$ & $774.1 \pm 81.7 \textbf{ (96.17\%)}$ \\
\midrule
PriMORL Low  &  22.3 & $817.4 \pm 21.7 \textbf{ (102.7\%)}$ &  85.0 & $815.8 \pm 97.2 \textbf{ (83.6\%)}$ & 94.2 & $772.4 \pm 73.9 \textbf{ (95.96\%)}$ \\
PriMORL High  &  5.1 & $778.9 \pm 53.5 \textbf{ (97.9\%)}$ &  8.2 & $758.2 \pm 187.2 \textbf{ (77.7\%)}$ & 17.0 & $698.3 \pm 57.5 \textbf{ (86.75\%)}$ \\
\bottomrule
\end{tabular}
\end{sc}
\end{center}
\vskip -0.1in
\end{table}

\section{Discussion}
\label{sec:discussion}

While existing DP RL methods are limited to simple finite-horizon MDPs, we are the first to address deep offline RL with privacy guarantees in the infinite-horizon discounted setting, and propose a model-based approach named \textsc{PriMORL}. We go beyond the simple numerical simulation experiments that prevail in the literature and show empirically that \textsc{PriMORL} is capable of learning trajectory-level private policies in standard control tasks, with only limited performance cost. Although the reported privacy budgets are typically considered too large to stand as formal DP guarantees, we argue, based on recent studies on practical differential privacy (\textit{e.g.}, \citet{ponomareva_how_2023}), that they can offer satisfying privacy protection in practice, especially considering the worst-case nature of DP which can yield too pessimistic privacy budgets. Empirical evaluation of the robustness of our algorithm against privacy attacks, for which a rigorous benchmark has to be developed, will thus be an important research direction for future work. We further point out that our approach has the potential for achieving greater privacy-performance trade-offs given access to large enough offline datasets, hence calling for new benchmarks in the increasingly important field of private offline RL.
All in all, we believe that our work represents a significant step towards the deployment of private RL methods in more complex, high-dimensional control problems.

\bibliography{main}
\bibliographystyle{iclr2025_conference}


\newpage
\appendix

\section{Proofs} \label{sec:proofs}

\DPModelPrivacy*

\begin{proof}
    Theorem 1 from \citet {McMahanRT018} shows that the moments accounting method from \citet{Abadi_2016} computes correctly the privacy loss of \textsc{DP-FedAvg} at user-level for the noise multiplier $z = \sigma / \mathbb{C}$ with $\mathbb{C} = C / qK$ if, for each user $u_k$, the clipped gradient $\Delta^{\text{clipped}}_k$ computed from $u_k$'s data has sensitivity bounded by $C$ (referred to as \textbf{Condition 1}). With \textsc{TDP Model Ensemble Training}, we train the model ensemble as a single big model: at each training iteration, the same input batch is processed forward by all models in a single pass, a single loss is computed for the ensemble, and the parameters are then updated in a single backward pass. The ensemble of models can therefore be seen as a concatenation of all individual models, equivalent to a larger model $\theta = \left(\theta_i\right)_{i=1}^N$. We can therefore extend this theorem by mapping users in federating learning to trajectories in offline RL, as long as \textbf{Condition 1} holds for every trajectory $\tau_k$.

    Since we use \textbf{ensemble clipping}, we verify that, for trajectory $\tau_k$, the ensemble gradient $\Delta^{\textsc{clipped}}_k = \left(\Delta^{\textsc{clipped}}_{i, k}\right)$ has sensitivity bounded by $C$. With \textbf{flat ensemble clipping}, the gradient of each model $i \in [\![1, N]\!]$ is clipped by a factor $C_i = \frac{C}{\sqrt{N}}$ (see Algorithm~\ref{alg:per_layer_clipping}). By construction, $\Delta^{\textsc{clipped}}_{i, k}$ has sensitivity bounded by $C_i$, \textit{i.e.}, $\max_{d(D, D^\prime) = 1} \Vert \Delta^{\textsc{clipped}}_{i, k}(D) - \Delta^{\textsc{clipped}}_{i, k}(D^\prime) \Vert_2 \le C_i$. Therefore, for two neighboring datasets $D$ and $D^\prime$:
    \begin{align*}
        \Vert \Delta^{\textsc{clipped}}_{k}(D) - \Delta^{\textsc{clipped}}_{k}(D^\prime) \Vert_2 &= \Vert \left(\Delta^{\textsc{clipped}}_{k}(D) - \Delta^{\textsc{clipped}}_{k}(D^\prime)\right)_{i=1}^N \Vert_2 \\
        &= \sqrt{\sum_{i=1}^N \Vert \Delta^{\textsc{clipped}}_{i, k}(D) - \Delta^{\textsc{clipped}}_{i, k}(D^\prime) \Vert_2^2} \\
        &\le \sqrt{\sum_{i=1}^N C_i^2} \\
        &= \sqrt{\sum_{i=1}^N \frac{C^2}{N}} \\
        &= C \enspace.
    \end{align*}
    This implies $\max_{d(D, D^\prime) = 1}  \Vert \Delta^{\textsc{clipped}}_{k}(D) - \Delta^{\textsc{clipped}}_{k}(D^\prime) \Vert_2 \le C$:  $\Delta^{\textsc{clipped}}_{k}$ has sensitivity bounded by $C$. We can derive the same proof for \textbf{per-layer ensemble clipping}. Therefore, Theorem 1 from \citet{McMahanRT018} holds for \textsc{TDP Model Ensemble Training}.
    
    We can therefore use the moments accountant $\epsilon^\text{MA}$ to compute, given $z >0$, $\delta \in (0,1)$, $q \in (0,1)$ and $T \in \mathbb{N}$, the total privacy budget $\epsilon$ spent by Algorithm \ref{alg:dp_model_training}, \textit{i.e.}, $\epsilon = \epsilon^{\text{MA}}(z, q, T, \delta)$.

    The dyanmics model output by Algorithm~\ref{alg:dp_model_training} is therefore $(\epsilon, \delta)$-TDP.
\end{proof}

\DPPolicyPrivacy*

\begin{proof}
    First, we establish that the pessimistic MDP $\Tilde{M}$ is private for $u \in \{u_\text{MA}, u_\text{MPD}\}$. By Theorem~\ref{thm:dp_model_privacy}, both the mean estimators $\{\mu_{\phi_i}\}_{i=1}^N$ the covariance estimators $\{\Sigma_{\psi_i}\}_{i=1}^N$ are private. Therefore, both uncertainty estimators $u_\text{MA}(s,a) = \lVert \Sigma_{\psi_i}(s,a) \rVert_F$ and $u_\text{MPD}(s,a) = \max_{i, j} \Vert f_{\phi_i} - f_{\phi_j} \Vert_2$, as data-independent transformations of the above quantities, are also private thanks to the post-processing property of DP. Therefore, the pessimistic model $\Tilde{M}$ remains $(\epsilon, \delta)$-TDP.
    
    Now, we can think of SAC model-based policy optimization (Algorithm~\ref{alg:sac_training}) as an abstract, randomized function $h_\Pi$, that takes as input $\hat{M}$ and outputs as policy $\hat{\pi}$. Furthermore, let $h_M$ denote the mechanism that takes as input the private offline dataset $\mathcal{D}_K$ and outputs the private pessimistic model $\hat{M}$, and which is $(\epsilon, \delta)$-TDP following \ref{thm:dp_model_privacy}. We observe that $h = h_\Pi \circ h_M$, where $h$ is the global offline RL algorithm which is the object of Definition \ref{def:tdp}. Since SAC only uses data from the model, as stated in Section \ref{subsubsec:policy_opt}, $h_\Pi$ is independent of the private offline data $\mathcal{D}_K$. In other words, $h_\Pi$ is a data-independent transformation of the private mechanism $h_M$. Thanks again to the post-processing property of differential privacy, $h$ is also $(\epsilon, \delta)$-TDP.
\end{proof}

We now prove the following two propositions:
\PerfPolicyNonPrivate*
\PerfPolicy*

\begin{proof}
    Let $\mathcal{F}$ denote the function class of the model. The model is estimated by maximizing the likelihood of the data $\mathcal{D}_K = \left(s_i, a_i, s^\prime_i\right)_{i=1}^N$, which is collected by an unknown behavioral policy $\pi^B$. This is equivalent to minimizing the negative log-likelihood. The population risk of the estimated model $\hat{P}$ obtained with \textsc{DP-SGD}, is therefore:
    \[
        \mathcal{L}(\hat{P}) = \mathbb{E}_{(s,a) \sim \rho^{\pi^B}_P, s^\prime \sim P(\cdot \vert s,a)} \left[- \log \hat{P}(s^\prime \vert s, a) \right] \enspace ,
    \]
    where $\rho^{\pi^B}_P$ is the (normalized) state-action occupancy measure under policy $\pi^B$ and dynamics $P$.
    
    Let us further assume that the true model $P$ belongs to the function class $\mathcal{F}$, and that $P \in \text{argmin}_{P^\prime \in \mathcal{F}} \mathcal{L}(P^\prime)$. We can therefore write the excess population risk of the model estimator $\hat{P}$ as:
    \[
         \mathcal{L}(\hat{P}) -  \mathcal{L}(P) = \mathbb{E}_{(s,a) \sim \rho^{\pi^B}_P, s^\prime \sim P(\cdot \vert s,a)} \left[\frac{\log P(s^\prime \vert s, a)}{\log \hat{P}(s^\prime \vert s, a)} \right] \enspace .
    \]
    But, denoting $D_{\text{KL}}(A, B)$ the Kullback-Leibler divergence between distributions $A, B$:
    \[
        D_{\text{KL}} \left(P(s,a), \hat{P}(s,a)\right) = \mathbb{E}_{s^\prime \sim P(\cdot \vert s,a)} \left[\frac{\log P(s^\prime \vert s, a)}{\log \hat{P}(s^\prime \vert s, a)} \right] \enspace .
    \]
    We can therefore rewrite the above excess population risk as:
    \begin{equation} \label{eq:excess_pop_kl_div}
        \mathcal{L}(\hat{P}) -  \mathcal{L}(P) = \mathbb{E}_{(s,a) \sim \rho^{\pi^B}_P} \left[ D_{\text{KL}} \left(P(s,a), \hat{P}(s,a)\right) \right] \enspace .
    \end{equation}

    If the objective function $\mathcal{L}$ is $L$-Lipschitz and $\Delta$-strongly convex, \cite{BassilyST14} shows (Theorem F.2) that a noisy gradient descent algorithm with $(\epsilon, \delta)$-DP guarantees satisfies, with probability at least $1-\alpha$:
    \begin{equation} \label{eq:bassily_bound}
        \mathcal{L}(\hat{P}) -  \mathcal{L}(P) = \mathcal{O} \left(\frac{L^2 \sqrt{d} \log^2(N / \delta) \cdot \text{poly} \log(1 / \alpha)}{\Delta N \epsilon \alpha}\right) \enspace .
    \end{equation}

    In the non-private case, \cite{ShalevSchwartz2009} provides the following bound under the same assumptions:
      \begin{equation} \label{eq:shalev_bound}
        \mathcal{L}(\hat{P}) -  \mathcal{L}(P) = \mathcal{O} \left(\frac{L^2 \log(N / \alpha)}{\Delta N}\right) \enspace .
    \end{equation}

    
    On the other hand, we have from the Simulation Lemma \citep{Kearns2002, Xu2020} that for a MDP $\mathcal{M}$ with reward upper bounded by $r_{\max} = 1$ and dynamics $P$, a behavioral policy $\pi^B$ and a learn transition model $\hat{P}$ with:
    \begin{equation} \label{eq:model_ineq}
        \mathbb{E}_{(s,a) \sim \rho^{\hat{\pi}_D}_P} \left[ D_{\text{KL}} \left(P(s,a), \hat{P}(s,a)\right) \right] \le \varepsilon_M \enspace ,
    \end{equation}
    which by \ref{eq:excess_pop_kl_div} is equivalent to:
    \begin{equation} \label{eq:model_ineq_loss}
        \mathcal{L}(\hat{P}) -  \mathcal{L}(P) \le \varepsilon_M \enspace ,
    \end{equation}
    
    if the divergence between $\hat{\pi}$ and the behavioral policy is bounded:
    \begin{equation} \label{eq:behav_ineq}
        \max_s D_\text{KL} \left(\hat{\pi}(\cdot \vert s), \pi^B(\cdot \vert s)\right) \le \varepsilon_B \enspace ,
    \end{equation}
    then the value evaluation error of $\hat{\pi}$ is bounded as:
    \begin{equation} \label{eq:value_gap}
        \vert \hat{V}^{\hat{\pi}} - V^{\hat{\pi}} \vert \le \frac{\sqrt{2} \gamma}{(1 - \gamma)^2} \sqrt{\varepsilon_M} + \frac{2 \sqrt{2} \gamma}{(1 - \gamma)^2} \sqrt{\varepsilon_B} \enspace .
    \end{equation}

    Since $f(x) = \mathcal{O}(g(x))$ implies $\sqrt{f(x)} = \mathcal{O}(\sqrt{g(x)})$ \footnote{Indeed, for $f(x)$ positive, for any $x \ge x_0$, $\vert f(x) \vert = f(x) \le M^\prime \times g(x)$, then, for any $x \ge x_0$, $\sqrt{f(x)} = \vert \sqrt{f(x)} \vert \le \sqrt{M^\prime} \times \sqrt{g(x)} = M \times \sqrt{g(x)}$}, we note that we can replace $\sqrt{\varepsilon_M}$ in the model term of the right-hand side of \ref{eq:value_gap} by the (square root of) the bounds from \ref{eq:bassily_bound} and \ref{eq:shalev_bound} in the private case and in the non-private case, respectively.
     
    We must also analyze the policy term of the right-hand side of \ref{eq:value_gap}, which scales with the divergence between $\pi$ and $\pi^B$. On the one hand, privacy aims at preventing overfitting particular data points, and is therefore likely to drive $\hat{\pi}$ away from the offline data,  hence increasing the divergence between $\hat{\pi}$ and $\pi^B$. On the other hand, the reward penalization used in model-based methods like \textsc{MOPO} implicitly constrains $\hat{\pi}$ to stay close to the data collection policy $\pi^B$, as it penalizes the reward on regions of the state-action space where the model uncertainty is high, corresponding to the regions not covered by the offline dataset. This is likely to cancel the diverging effect of privacy. Therefore, we consider that $ \max_s D_\text{KL} \left(\pi(\cdot \vert s), \pi^B(\cdot \vert s)\right)$ will not change significantly due to privacy, and assume that $\varepsilon_B$ stays constant between the private and the non-private case. 
    
    
\end{proof}



\newpage
\section{Related Work (Extended)}
\label{sec:related_work_ext}

\subsection{Model-Based Offline Reinforcement Learning}

Unlike classical RL \citep{SuttonB98} which is online in nature, offline RL \citep{Levine_Offline_RL_2020, Figueirido2022} aims at learning and controlling autonomous agents without further interactions with the system. This approach is preferred or even unavoidable in situations where data collection is impractical (see for instance \citet{SinghKSRobotic22, SiqiHealthcare2020, autonomous_driving_survey}). Model-based RL \citep{MoerlandMBRLSurvey2023} can also help when data collection is expensive or unsafe as a good model of the environment can generalize beyond in-distribution trajectories and allow simulations. Moreover, model-based RL has been shown to be generally more sample efficient than model-free RL \citep{ChuaCML18}. \citet{ArgensonD21} also show that model-based offline planning, where the model is learned offline on a static dataset and subsequently used for control without further accessing the system, is a viable approach to control agents on robotic-like tasks with good performance. Unfortunately, the offline setting comes with its own major challenges. In particular, when the data is entirely collected beforehand, we are confronted to the problem of \textit{distribution shift} \citep{FujimotoMP19}: as the logging policy used to collect the training dataset only covers a limited (and potentially small) region of the state-action space, the model can only be trusted in this region, and may be highly inaccurate in other parts of the space. This can lead to a severe decrease in the performance of classic RL methods, particularly in the model-based setting where the acting agent may exploit these inaccuracies in the model
, causing large gap between performances in the true and the learned environment.
\textsc{MOPO} \citep{yu_mopo_2020} and \textsc{MOReL} \citep{kidambi_morel_2021}, and more recently \textsc{Count-MORL} \citep{kim_count_2023} have effectively tackled this issue by penalizing the reward proportionally to the model's uncertainty, achieving impressive results on popular offline benchmarks. Nonetheless, there remain many areas for improvement, as highlighted by \cite{mbrl_uncertainty_2022}, which extensively study and challenge key design choices in offline MBRL algorithms.

\subsection{Privacy in Reinforcement Learning}

Differential Privacy (DP), first formalized in \citet{Dwork06}, has become the gold standard in terms of privacy protection. Over the recent years, the design of algorithms with better privacy-utility trade-offs has been a major line of research. In particular, relaxations of differential privacy and more advanced composition tools have allowed tighter analysis of privacy bounds \citep{DworkRV10, DworkR16, BunCDP2016, MironovRDP2017}. Leveraging these advances, the introduction of \textsc{DP-SGD} \citep{Abadi_2016} has allowed to design private deep learning algorithms, paving the way towards a wider adoption of DP in real-world settings, although the practicalities of differential privacy remain challenging
\citep{ponomareva_how_2023}.
In parallel to the theoretical analysis of privacy, many works have focused on designing more and more sophisticated attacks, justifying further the need to design DP algorithms (\citep{RigakiAttacksSurvey2020}). 

Recent works on RL-specific attacks \citep{pan_how_2019, prakash_how_2022, gomrokchi_membership_2022} have demonstrated that reinforcement learning (RL) is no more  immune to privacy threats.
With RL being increasingly used to provide personalized services \citep{Rl_perso_survey_2020}, which may expose sensitive user data, developing privacy-preserving techniques for training policies has become crucial.
Shortly after DP was successfully extended to multi-armed bandits \citep{TossouD16, Basu2019}, a substantial body of work (\textit{e.g.}, \citet{vietri2020private, GarcelonPPP21, Liao2021, luyo2021differentially, chowdhury_differentially_2021, zhou_differentially_2022, ngo_linear_2022, qiao_tabular_2023}) addressed privacy in online RL, extending definitions from bandits. 
However, relying on count-based and UCB-like methods, current RL algorithms with formal DP guarantees are essentially limited to episodic tabular or linear MDPs, and have not been assessed empirically beyond simple numerical simulations.
However, current RL algorithms with formal DP guarantees are essentially limited to episodic tabular or linear MDPs, and have not been assessed empirically beyond simple numerical simulations.
Few works have proposed private RL methods for more general problems, however with significant limitations or in different contexts. \cite{Wang2019} tackle continuous state spaces by adding functional noise to Q-Learning, but the approach is restricted to unidimensional states and focuses on protecting reward information. Recently, \cite{mutual_info_rl_2024} addressed high-dimensional control and robotic tasks; however, they consider a specific notion of privacy that protects sensitive state variables based on a mutual information framework.

Despite the relevance of the setting for real-world RL deployments, private offline RL has received comparatively less attention. To date, only \citet{qiao_offline_2022} have proposed DP offline algorithms, building on non-private value iteration methods. While their approach lays the groundwork for private offline RL and offers strong theoretical guarantees, it remains limited to episodic tabular and linear MDPs. Consequently, no existing work has introduced DP methods that can handle deep RL environments in the infinite-horizon discounted setting, a critical step toward deploying private RL algorithms in real-world applications. With this work, we aim to fill this gap by proposing a differentially private, deep model-based RL method for the offline setting.


\newpage
\section{Presentation of the Tasks}


\textsc{CartPole} requires to swing up then balance an unactuated pole by applying forces on a cart at its base, while \textsc{CartPole-Balance} only requires keeping balance. The duration of both tasks is 1,000 steps.
\textsc{Pendulum} involves controlling an inverted pendulum by applying torque to keep it upright and balanced over 200 steps. For this task, we normalize the episodic return to obtain a normalized score between 0 and 1000, using the following formula: $s_{\text{normalized}} = \frac{s - (-1500)}{0 - (-1500))}$.
\textsc{HalfCheetah} is another, higher-dimensional, continuous control task from OpenAI's Gym \citep{openai_gym} based on the physics engine MuJoCo \citep{mujoco} where we move forward a 2D cat-like robot by applying torques on its joints. The duration of an episode is 1,000 steps. 

\section{Data Collection}
\label{sec:app_data_collection}

To collect our offline dataset for \textsc{CartPole} and \textsc{Pendulum}, we used DDPG \citep{DDPG}, a model-free RL algorithm for continuous action spaces. We ran 600 independent runs of $50,000$ steps each for \textsc{Cartpole-Balance}, 150 independent runs of $200,000$ steps each for \textsc{Cartpole-SwingUp}, and 6 independent runs of $1M$ steps each for \textsc{Pendulum}. We collect all training episodes to ensure a correct mix between random, medium and expert episodes (similar to \textit{replay} datasets in \cite{fu2020d4rl}).

\section{Baselines}

\begin{table}[h]
\caption{\textsc{PriMORL} configurations.}
\label{tab:dp_morl_config}
\vskip 0.15in
\begin{center}
\begin{small}
\begin{sc}
\begin{tabular}{lcccc}
\toprule
Variant & \textsc{Trajectory-Level Ens. Training} & Clip & Noise & DP \\
\midrule
No Clip  & $\cmark$ & $\xmark$ & $\xmark$ & $\epsilon = \infty$\\
No Privacy  & $\cmark$ & $\cmark$ & $\xmark$ & $\epsilon = \infty$ \\
Low, High &  $\cmark$ & $\cmark$ & $\cmark$ & $\epsilon < \infty$ \\
\bottomrule
\end{tabular}
\end{sc}
\end{small}
\end{center}
\vskip -0.1in
\end{table}

The first two baselines, \textsc{PriMORL No Clip} and \textsc{PriMORL No Privacy} are not private ($\epsilon < \infty$) but allow us to isolate the impact of trajectory-level model ensemble training (without clipping and noise addition) and clipping on policy performance. We do not report results for \textsc{PriMORL No Clip} for \textsc{CartPole} and \textsc{Pendulum} as we found that the model optimized with \textsc{TDP Model Ensemble Training} diverges without clipping.

\section{Experiment Details} \label{sec:detail_exps}

\subsection{Datasets}

Table~\ref{tab:datasets} provides additional details on the offline datasets used in experiments. Figure~\ref{fig:dataset_statistics} shows episode return statistics for each dataset.

\begin{table}[ht]
\caption{Dataset details}
\label{tab:datasets}
\vskip 0.15in
\begin{center}
\begin{small}
\begin{sc}
\begin{tabular}{lccc}
\toprule
 & \textsc{CartPole} & \textsc{Pendulum} & \textsc{HalfCheetah} \\
\midrule
Origin & Custom & Custom & \textsc{D4RL} \\
Observation space $\mathcal{S}$ & $\mathbb{R}^5$ & $\mathbb{R}^3$ & $\mathbb{R}^{17}$ \\
Action space $\mathcal{A}$ & $[-1,1]$ & $[-2,2]$ & $[0,1]^6$ \\
Nb. of episodes $K$ & 30,000 & 30,000 & 2,003 \\
\bottomrule
\end{tabular}
\end{sc}
\end{small}
\end{center}
\vskip -0.1in
\end{table}

\begin{figure}[t]
\label{fig:dataset_statistics}
\centering
\includegraphics[width=0.8\linewidth]{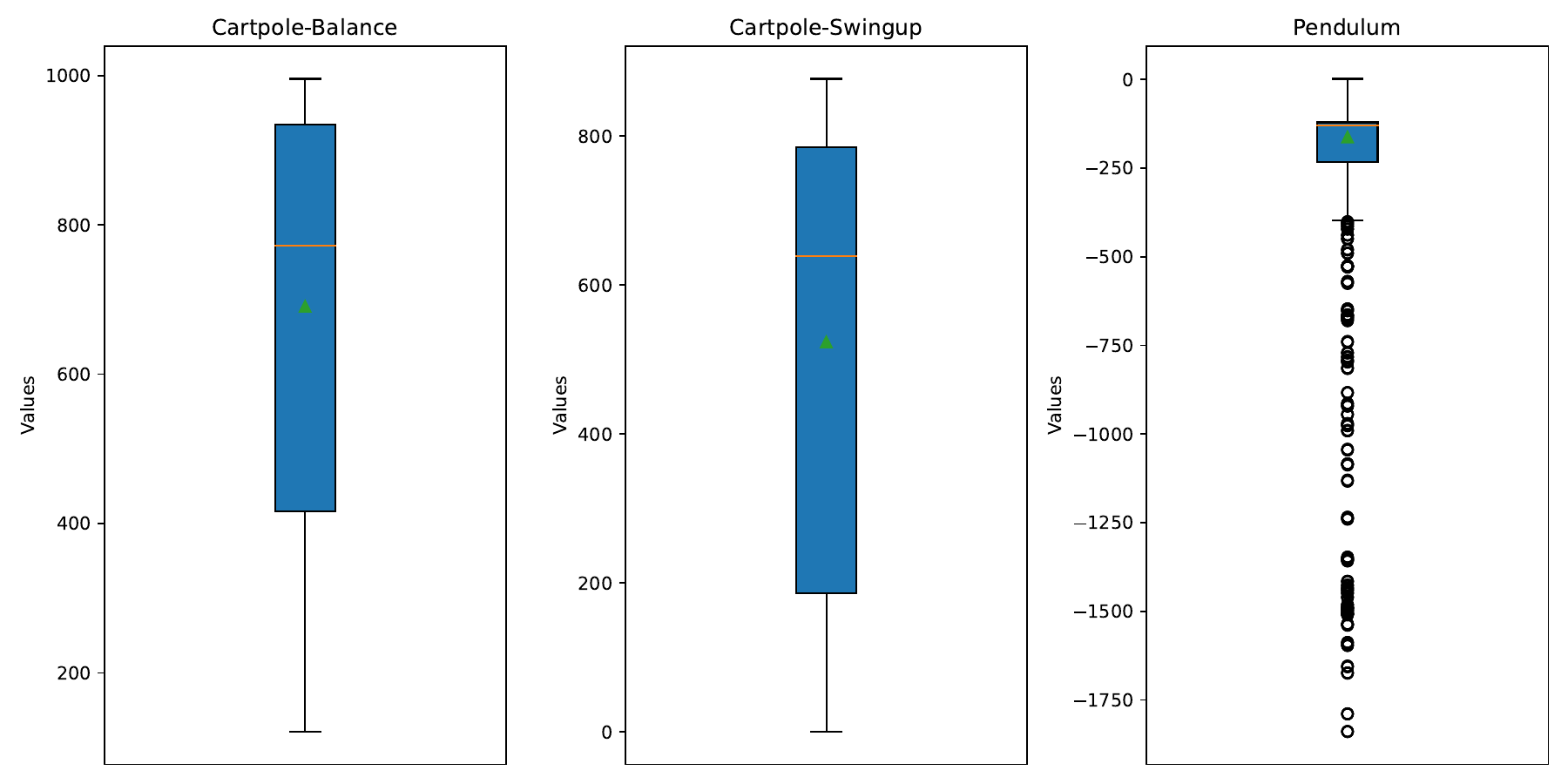}
\end{figure}

\subsection{Implementation Details}

For all tasks, the model is approximated with a deep neural network with \textsc{Swish} activation functions and decaying weights. Models take as input a concatenation of the current state $s$ and the taken action $a$ and predict the difference between the next state $s^\prime$ and the current state $s$ along with the reward $r$. Table \ref{tab:implementation} provides further implementation details. 

The code repository for \textsc{PriMORL} is provided as part of the supplementary material and will be made public upon acceptance. For \textsc{MOPO}, we use the official implementation from \url{https://github.com/tianheyu927/mopo}, as well as the PyTorch re-implementation from \url{https://github.com/junming-yang/mopo}. Our implementation of \textsc{PriMORL}, which mainly uses PyTorch, is also based on these codebases. To collect the datasets, we use DDPG implementation from \url{https://github.com/schatty/DDPG-pytorch}.

Model training with \textsc{TDP Model Ensemble Training} is parallelized over 16 CPUs using JobLib, while SAC training is conducted over a single Nvidia Tesla P100 GPU.

\begin{table}[ht]
\caption{Implementation details}
\label{tab:implementation}
\vskip 0.15in
\begin{center}
\begin{small}
\begin{sc}
\begin{tabular}{lccc}
\toprule
 & \textsc{CartPole} & \textsc{Pendulum} & \textsc{HalfCheetah} \\
\midrule
Model input dimension & 6 & 4 & 23 \\
Model output dimension & 6 & 4 & 18 \\
Model hidden layers & 2 & 2 & 4 \\
Neurons per layer & 128 & 64 & 200 \\
Weight decay & \cmark & \cmark & \cmark \\
Activation functions & SWISH & SWISH & SWISH \\
Ensemble size $N$ & 5 & 3 & 7 \\
\bottomrule
\end{tabular}
\end{sc}
\end{small}
\end{center}
\vskip -0.1in
\end{table}

\subsection{Training Details}

\begin{table}[ht]
\caption{Training and Hyperparameters details}
\label{tab:training_hyper}
\vskip 0.15in
\begin{center}
\begin{small}
\begin{sc}
\begin{tabular}{lccc}
\toprule
 & \textsc{CartPole} & \textsc{Pendulum} & \textsc{HalfCheetah} \\
\midrule
Test set size & $1\% \times K$ & $1\% \times K$ & $10\% \times K$ \\
Early stopping & $\cmark \text{ patience} = 10$ & $\cmark \text{ patience} = 10$ &  $\cmark \text{ patience} = 5$ \\
\midrule
Sampling ratio $q$ & $10^{-3}$ & $10^{-3}$ & $10^{-2}$ \\
Model local epochs $E$ & 1 & 1 & 1 \\
Model batch size $B$ & 16 & 16 & 16 \\
Model LR $\eta$ & $10^{-3}$ & $10^{-3}$ & $10^{-3}$ \\
Clipping Strategy & Flat & Per-Layer & Per-Layer \\
\midrule
SAC LR & $3.10^{-4}$ & $3.10^{-4}$ & $3.10^{-4}$\\ 
Rollout length $H$ & 20 & 30 & 5 \\
Reward penalty $\lambda$ & 2.0 & 2.0 & 1.0 \\
Auto-$\alpha$ & $\cmark$ & $\cmark$ & $\cmark$ \\
Target entropy $H$ & -3 & -3 & -3 \\
\bottomrule
\end{tabular}
\end{sc}
\end{small}
\end{center}
\vskip -0.1in
\end{table}

Before model training, we split the offline dataset into two parts: a train set used to train the model, and a test set used to track model performance. We consider the test set public so that this operation does not involve additional privacy leakage. The split is made by episode (instead of by transitions), so that the test set contains 1\% of the episodes for \textsc{CartPole} and \textsc{Pendulum} and 20\% for \textsc{HalfCheetah}. To tune the clipping norm, we set $z=0$ and progressively decreased $C$ until it started to adversely affect performance provided the best results. Moreover, we set the sampling ratio so that a few dozen episodes are randomly selected at each step, which proved to work best in our experiments, which correspond to $q=10^{-3}$ for \textsc{Cartpole} and \textsc{Pendulum}. The model is trained until convergence using \textit{early stopping}. Test set prediction error is used to track model improvement. For SAC training, the real-to-model ratio $r_{\text{real}}$ is zero, meaning that SAC is trained using only simulated data from the model, and does not access any data from the offline dataset. Training details are provided in Table~\ref{tab:training_hyper}.

\subsection{Hyperparameters}

The model is trained using \textsc{TDP Model Ensemble Training} with learning rate $\eta=10^{-3}$, batch size $B=16$, and number of local epochs $E=1$. 

The policy is optimized within the model using Soft Actor-Critic with rollout, with rollout length and penalty depending on the task. We use a learning rate of $3 . 10^{-4}$ for both the actor and the critic. For entropy regularization, we use auto-$\alpha$ with target entropy $H=-3$.

Hyperparameters are summarized in Table~\ref{tab:training_hyper}. We do not report the privacy loss resulting from hyperparameter tuning, although we recognize its importance in real-world applications.

\subsection{Privacy Parameters}

In Table \ref{tab:cartpole_results}, we provide the privacy budgets $\epsilon$ computed with the moments accountant method from \citet{Abadi_2016}. We use the DP accounting tools from Google's Differential Privacy library, availabe on \href{https://github.com/google/differential-privacy}{GitHub}. Privacy budget are computed for $\delta=10^{-5}$, \textit{i.e.} less than $K^{-1}$ as recommended in the literature. It also depends on the noise multiplier $z$, the number of training round $T$ and the sampling ratio $q$. Since we use early stopping and the different training runs have different durations, we use the average number of training rounds in the privacy budget computations.

For \textsc{CartPole-Balance}, we use $z=0.25$ and $z=0.45$ for \textsc{PriMORL Low} and \textsc{PriMORL High}, respectively. For \textsc{CartPole-SwingUp}, we use $z=0.25$ and $z=0.38$ for \textsc{PriMORL Low} and \textsc{PriMORL High}, respectively. The value for \textsc{PriMORL High} is chosen by incrementally increasing $z$ until policy performance drops below acceptable levels. The corresponding $\epsilon$ is therefore roughly the best privacy budget we can obtain while keeping acceptable policy performance. The value for \textsc{PriMORL Low} is chosen arbitrarily to provide a weaker level of privacy that typically yields higher policy performance, illustrating the trade-off between the strength of the privacy guarantee and the performance. 

\subsection{Computational Resources} \label{sec:comp_resources}

We perform training on a single machine with 64 CPUs and 6 Tesla P100 GPUs with 16GB RAM each. The full training of a single policy, from model learning to policy optimization, takes several hours.

\section{Additional Experiments}

Figures~\ref{fig:diff_estimators} and \ref{fig:pendulum_f_eps} provide additional empirical insights about \textsc{PriMORL}. In Figure~\ref{fig:diff_estimators}, we can see that neither uncertainty estimators is superior overall, but the choice of estimator may impact privacy performance for a specific tasks. On the other hand, Figure~\ref{fig:pendulum_f_eps} shows the performance of \textsc{PriMORL} on \textsc{Pendulum} as a function of the privacy strength $\epsilon$. We can see that performance does not degrade until $\epsilon$ goes in the $1$ to $10$ range, where it starts to drop significantly.

\begin{figure*}[t]
\centering
\includegraphics[width=0.65\linewidth]{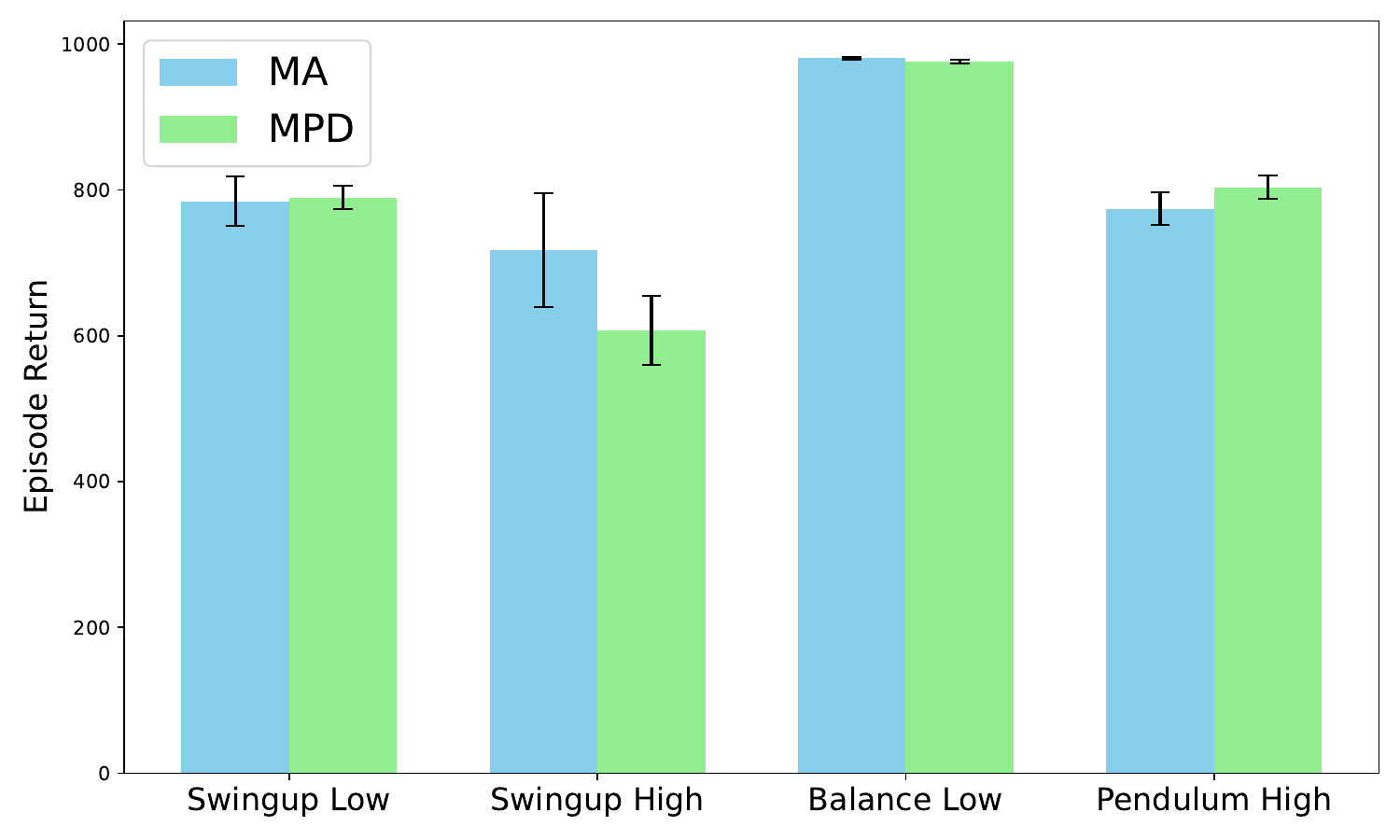}
\caption{Comparison of policy performance with $u_\text{MA}$ and $u_\text{MPD}$ for a fixed model. We measure the average performance of the policy over the last 10 epochs of training. Average and confidence intervals are computed over 5 random seeds.}
\label{fig:diff_estimators}
\end{figure*}

\begin{figure*}[t]
\centering
\includegraphics[width=0.65\linewidth]{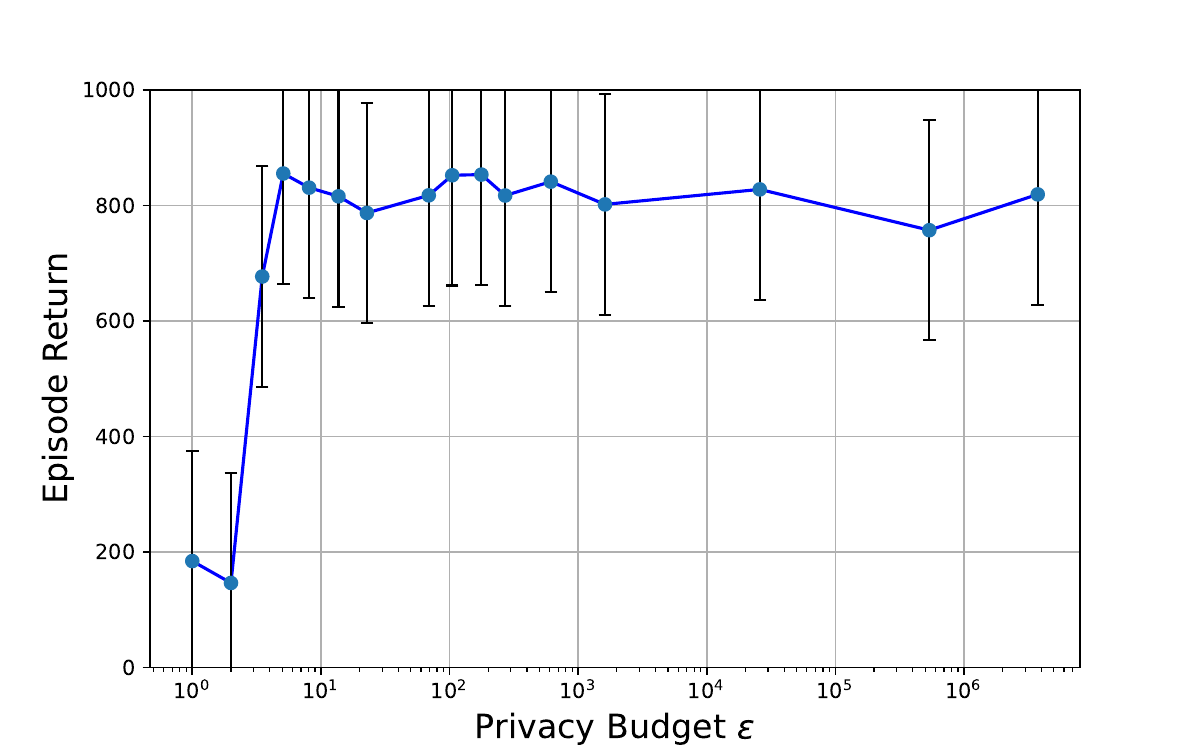}
\caption{Policy performance on \textsc{Pendulum} as a function of the privacy budget $\epsilon$. We measure the average performance of the policy over the last 5 epochs of training. Average and confidence intervals are computed over 5 random seeds.}
\label{fig:pendulum_f_eps}
\end{figure*}

\section{Experiments on \textsc{HalfCheetah}} \label{sec:exps_halfcheetah}

We conduct experiments on the \textsc{Medium-Expert} dataset ($K=2,003$) from the classic \textsc{D4RL} benchmark \citep{fu2020d4rl}. 
Experimental results are reported in Figure~\ref{fig:halfcheetah_plot} and Table~\ref{tab:halfcheetah_results} (in appendix), using $C=15.0$ and $q=10^{-2}$. 

\begin{figure*}[t]
\centering
\includegraphics[width=0.65\linewidth]{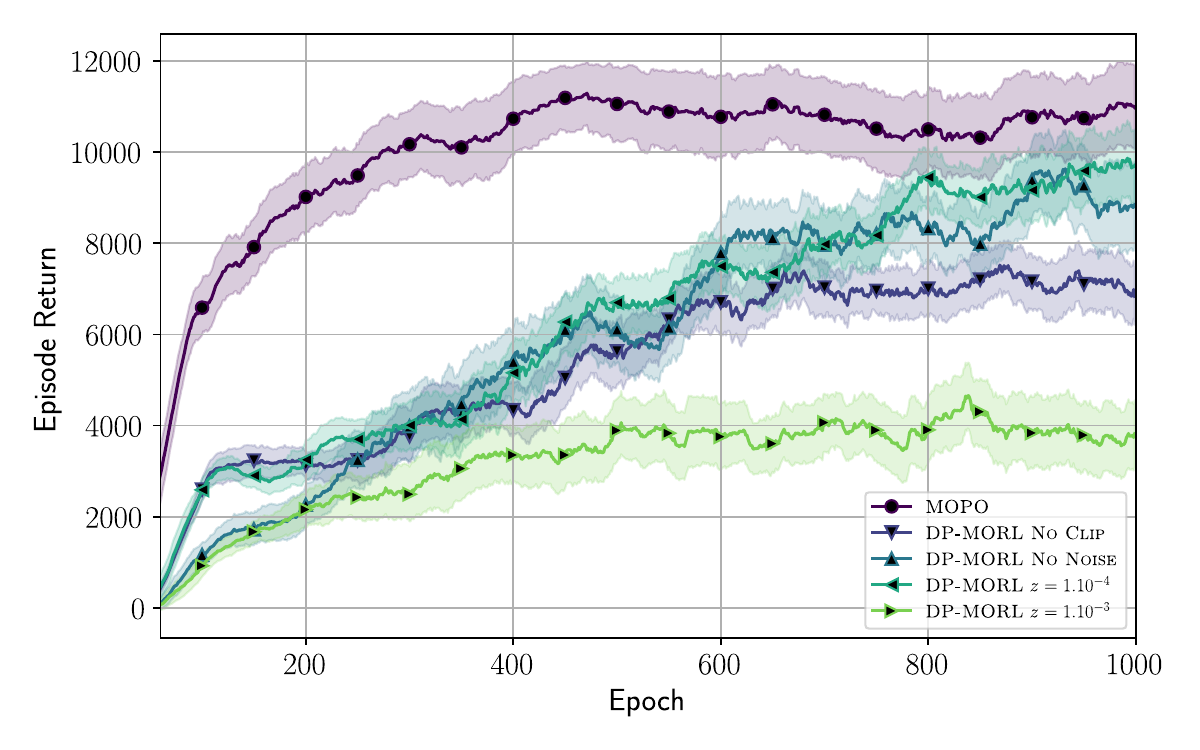}

\caption{Learning curves for the SAC policy on \textsc{HalfCheetah} (\textit{right}). Policy performance (episodic return) is evaluated in the true MDP at the end of each training epoch, over 10 evaluation episodes with different random seeds.}
\label{fig:halfcheetah_plot}
\end{figure*}

\begin{table}[ht]
\caption{Results for \textsc{HalfCheetah Medium-Expert}. \textsc{Return} is the return of the SAC policy evaluated over 10 episodes at the end of each training epoch, averaged over the last 20 epochs.}
\label{tab:halfcheetah_results}
\vskip 0.15in
\begin{center}
\begin{small}
\begin{sc}
\begin{tabular}{lcr}
\toprule
Method & $z$ & Return \\
\midrule
MOPO & 0.0 & 10931 $\pm$ 1326 \\
\midrule
PriMORL No Clip  & 0.0 & 7062 $\pm$ 2230 \\
PriMORL No Noise  & 0.0 & 8792 $\pm$ 2053 \\
\midrule
PriMORL &  $z=1.10^{-4}$ & 9729 $\pm$ 2018 \\
 &  $z=1.10^{-3}$ & 3697 $\pm$ 1465 \\
\bottomrule
\end{tabular}
\end{sc}
\end{small}
\end{center}
\vskip -0.1in
\end{table}

If \textsc{PriMORL} can train competitive policies with small enough noise levels --- a tiny amount of noise like $z=10^{-4}$ proving even beneficial, possibly acting as a kind of regularization ---,  we were not able to obtain reasonable $\epsilon$'s. Indeed, a noise multiplier as small as $z=10^{-3}$ is enough to cause a significant decline in performance. \textsc{HalfCheetah} thus appears a significantly harder tasks than \textsc{Cartpole} and \textsc{Pendulum}. It is not surprising as $\textsc{HalfCheetah}$ is higher-dimensional, and the theoretical analysis led in Section\ref{sec:impact_privacy_policy_opt} showed that the dimension $d$ of the problem could negatively impact the performance of the policy. However, we point out that the size of the dataset for \textsc{HalfCheetah} is very limited, and argue that larger datasets with substantially more episodes would translate into competitive privacy-performance trade-offs, as we develop in Section \ref{sec:price_of_privacy}.

\newpage
\section{Algorithms} \label{sec:algos}

Algorithm~\ref{alg:dp_model_training_full} is the fully detailed pseudo-code for \textsc{PriMORL}. Algorithm~\ref{alg:per_layer_clipping} details the clipping method used in \textsc{TDP Model Ensemble Training}. Algorithm~\ref{alg:sac_training} is the pseudo-code for SAC policy optimization on the pessimistic private model. This pseudo-code is based on \url{https://spinningup.openai.com/en/latest/algorithms/sac.html}

\begin{algorithm}[ht]
\begin{algorithmic}[1]
\STATE {\bfseries Input:} offline dataset $\mathcal{D}_K$, sampling ratio $q \in (0,1)$, noise multiplier $z \ge 0$, clipping norm $C > 0$, local epochs $E$, batch size $B$, learning rate $\eta$
\STATE {\bfseries Output:} private model $\hat{M}_\theta$
\STATE Initialize model parameters $\theta_0$
\FOR{each iteration $t \in [\![0, T-1]\!]$}
    \STATE $\mathcal{U}_t \leftarrow $ (sample with replacement trajectories from $\mathcal{D}_K$ with prob. $q$)
    \FOR{each trajectory $\tau_k \in \mathcal{U}_t$}
        \STATE Clone current models $\left\{\theta_i^{\text{start}}\right\}_{i=1}^N \leftarrow \left\{\theta_i(t)\right\}_{i=1}^N$
        \STATE $\theta \leftarrow \theta^{\text{start}} :=\left(\theta^{\text{start}}\right)_{i=1}^N$
        \FOR{each local epoch \tikzmark{top} $i \in [\![1, E]\!]$}
            \STATE $\mathcal{B} \leftarrow $ ($\tau_k$'s data split into size $B$ batches)
            \FOR{each batch $b \in \mathcal{B}$}
                \STATE $\theta \leftarrow \theta - \eta \nabla \mathcal{L}(\theta; b)$
                \STATE $\theta \leftarrow \theta^{\text{start}} + \text{\textsc{EnsembleClip}}(\theta - \theta^{\text{start}}; C)$ \tikzmark{right}
            \ENDFOR
        \ENDFOR \tikzmark{bottom}
        \STATE $\Delta^{\text{clipped}}_{t,k} \leftarrow \theta - \theta^{\text{start}}$
    \ENDFOR
    \STATE $\Delta_i^{\text{avg}}(t) = \frac{\sum_{k \in \mathcal{U}_t} \Delta_{i, k}^{\text{clipped}}(t)}{q K}$
    \STATE $\theta(t+1) \leftarrow \theta(t) + \Delta^{\text{avg}}(t) + \mathcal{N}\left(0_{N_d}, \left(\frac{zC}{qK}\right)^2 I_{N_d}\right)$
\ENDFOR
\end{algorithmic}
\AddNote{top}{bottom}{right}{$\textsc{  EnsClipGD}\left(\tau_k, \left\{\theta_i^{\text{start}}\right\}_{i=1}^N; C, E, B\right)$}
\caption{Model Training with \textsc{TDP Model Ensemble Training}}
\label{alg:dp_model_training_full}
\end{algorithm}

\begin{algorithm}[ht]
\begin{algorithmic}[1]
\STATE {\bfseries Input:} ensemble size $N$, number of model layers $L$, unclipped gradient $\Delta = \left\{\Delta_{i, \ell}\right\}_{i, \ell=1}^{N, L}$, clipping norm $C$
\STATE {\bfseries Output:} clipped gradient $\Delta^{\text{clipped}}$
\STATE $\Delta_i \leftarrow \left(\Delta_{i, \ell}\right)_{\ell=1}^L, \enspace C_i = \frac{C}{\sqrt{N}}$
\STATE \[\Delta^{\text{clipped}}_i \leftarrow \frac{\Delta_i}{\max\left(1, \frac{\lVert \Delta_i \rVert}{C_i} \right)}, \enspace j=1,...,m. \]
\end{algorithmic}
\caption{Ensemble Clipping (\textsc{EnsembleClip})}
\label{alg:per_layer_clipping}
\end{algorithm}

\begin{algorithm}[ht]
\begin{algorithmic}[1]
\STATE {\bfseries Input:} private model $\hat{M}=(\hat{P}, \hat{r})$, empty replay buffer $\mathcal{B}$, uncertainty estimator $u \in \{u_\text{MA}, u_\text{MPD}\}$
\STATE {\bfseries Output:} private policy $\hat{\pi}^{\text{DP}}$
\STATE Initialize policy parameters $\xi$, Q-function parameters $ \omega_1, \omega_2$ and target parameters $\omega_{\text{targ}, 1}, \omega_{\text{targ}, 2}$
\FOR{epoch $e \in [\![1, E]\!]$}
    \WHILE{episode is not terminated}
        \STATE Observe state $s$ and select action $a \sim \pi_{\xi}\left(\cdot \vert s\right)$
        \STATE Execute $a$ in the pessimistic MDP $\Tilde{\mathcal{M}}$ and observe next state $s^\prime \sim \hat{P}\left(\cdot \vert s,a\right)$, reward $r \sim \hat{r}(s,a) - \lambda u(s,a)$ and done signal $d$
        \STATE Store $\left(s, a, r, s^\prime, d\right)$ in replay buffer $\mathcal{B}$
        \IF{time to update}
            \STATE Sample a batch of transitions $B = \left\{\left(s, a, r, s^\prime, d\right)\right\}$ from buffer $\mathcal{B}$
            \STATE Compute targets for Q-functions:
            \[
                y(r, s^\prime, d) = r + \gamma(1-d) \left(\min_{i=1,2} Q_{\omega_{\text{targ}, i}}(s^\prime, \Tilde{a}^\prime) - \alpha \log \pi_\xi(\Tilde{a}^\prime \vert s^\prime)\right), \enspace \Tilde{a}^\prime \sim \pi_\xi(\cdot \vert s^\prime) \enspace .
            \]
            \STATE Update Q-functions by one step of gradient descent using:
            \[
                \nabla_{\omega_i} \frac{1}{\vert B \vert} \sum_{(s,a,r,s^\prime,d) \in B} \left(Q_{\omega_i}(s,a) - y(r,s^\prime,d)\right)^2, \enspace \text{for } i=1,2.
            \]
            \STATE Update policy by one step of gradient ascent using:
            \[
                 \nabla_{\xi}\frac{1}{\vert B \vert} \sum_{s \in B} \left(\min_{i=1,2} Q_{\omega_i} (s, \Tilde{a}_\zeta(s)) - \alpha \log \pi_\xi \left(\Tilde{a}_\zeta(s) \vert s\right)\right), \enspace \Tilde{a}_\zeta(s) \sim \pi_\xi(\cdot \vert s).
            \]
            \STATE Update target networks with:
            \[
                \omega_{\text{targ}, i} \leftarrow \rho \omega_{\text{targ}, i} + (1 - \rho)\omega_i, \enspace \text{for } i=1,2 \enspace.
            \]
        \ENDIF
    \ENDWHILE
    \STATE Evaluate $\pi_\xi$ is the true environment $\mathcal{M}$.
\ENDFOR
\end{algorithmic}
\caption{Private Model-Based Optimization with SAC}
\label{alg:sac_training}
\end{algorithm}

\newpage

\clearpage
\section{The Price of Privacy in Offline RL} \label{sec:app_price_privacy}

In this section, we provide theoretical and practical arguments to further justify the need for (much) larger datasets in order to achieve competitive privacy trade-offs in offline RL, as pointed out in (Section~\ref{sec:exps}).

\textbf{Why does privacy benefit so much from large datasets?} From a theoretical perspective, it stems from two facts: 1) $\epsilon$ scales with the sampling ratio $q$ (\textit{privacy amplification by subsampling}), and 2)  noise magnitude $\sigma$ is inversely proportional to $\mathbb{E}\left[\lvert \mathcal{U}_t \rvert \right] = qK$. Clearly, the privacy-performance trade-off would benefit from both small $q$ (reducing $\epsilon$) and large $qK$ (reducing noise levels and thus improving performance), which are conflicting objectives for a fixed $K$. However, if we consider using larger datasets of size $K^\prime \gg K$, it becomes possible to find a $K^\prime$ large enough so that we can use $q^\prime \ll q$ and $q^\prime K^\prime \gg q K$, achieving both much stronger privacy and better performance. We can even argue that for a given privacy budget $\epsilon$ (obtained for a given $q$) and an unlimited capacity to increase $K$, we could virtually tend to zero noise levels and achieve optimal performance. 
Therefore, \textsc{PriMORL}, already capable of producing good policies with significant noise levels and $\epsilon$, has the potential to achieve stronger privacy guarantees provided access to large enough datasets.

An aspect that deserves further development is the iterative aspect of the used training methods and its effect on privacy. Differential privacy being a worst-case definition, it assumes that all intermediate models are released during training. Although the practicality of this hypothesis is debatable, it definitely impacts privacy: privacy loss is incurred at each training iteration (corresponding to a gradient step on the global model in \textsc{DP-SGD} and \textsc{TDP Model Ensemble Training}) and privacy budget, therefore, scales with the number of iterations $T$. Consequently, limiting the number of iterations is even more crucial with DP training than with non-private training. Training a model on the kind of tasks we considered nonetheless requires a lot of iterations to reach convergence (empirically, thousands of iterations for \textsc{CartPole} and tens of thousands of iterations for \textsc{HalfCheetah}), and the privacy budget suffers unavoidably.

However, one way to circumvent this is to leverage privacy amplification by subsampling. Indeed, as \citet{McMahan_FedAvg_17} observe, the additional privacy loss incurred by additional training iterations becomes negligible when the sampling ratio $q$ is small enough, which is a direct effect of privacy amplification by subsampling. We discussed above how increasing dataset size $K$ allowed to decrease both sampling ratio $q$ and noise levels. Therefore, by increasing the size of the dataset, we also greatly reduce the impact of the number of training iterations, likely promoting model convergence. This further reinforces the need for large datasets in offline RL in order to study privacy. As an example, \citet{McMahanRT018} consider datasets with $10^6$ to $10^9$ users to train DP recurrent language models, and this is arguably the main reason why they achieve formal strong privacy guarantees. For comparison, the classical \textsc{RL Unplugged} and \textsc{D4RL} benchmarks provide datasets with $K \approx 10^1$ to $K \approx 10^3$ datasets. Achieving the privacy-performance trade-offs demonstrated in Section~\ref{sec:exps} would not have been possible without the collection of large datasets. Moreover, datasets orders of magnitude larger would be required to attain formal, strong privacy guarantees, such as $\epsilon < 1$. While conducting experiments in deep offline RL with such extensive datasets demands substantial computational resources, we argue that scenarios involving access to datasets with a vast number of trajectories are reflective of real-world situations. For this reason, we consider this case worthy of thorough investigation.

Figure~\ref{fig:epsilon_plots} illustrates this point in another way. Given $\epsilon \in \{10^{-4}, 10^{-3}, 10^{-2}\}$, we plot for a range of sampling ratio $q$ the maximum number of iterations $T$ that is allowed so that the total privacy loss does not exceed $\epsilon$, as a function of the noise multiplier $z$. We can see how decreasing $q$ makes it well easier to train a private model: dividing $q$ by 10, we "gain" roughly 10 times more iterations across all noise levels.

\begin{figure*}[ht]
\centering
\begin{minipage}{0.49\textwidth}
\includegraphics[width=\linewidth]{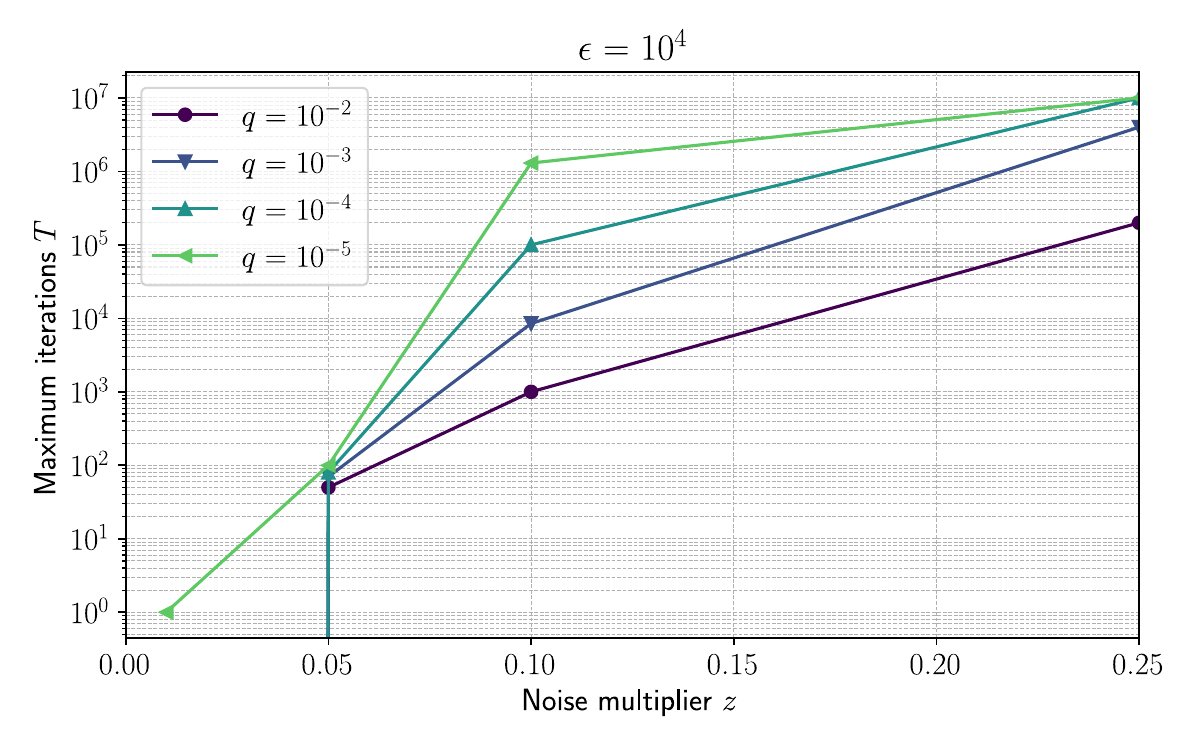}
\end{minipage}
\begin{minipage}{0.49\textwidth}
\includegraphics[width=\linewidth]{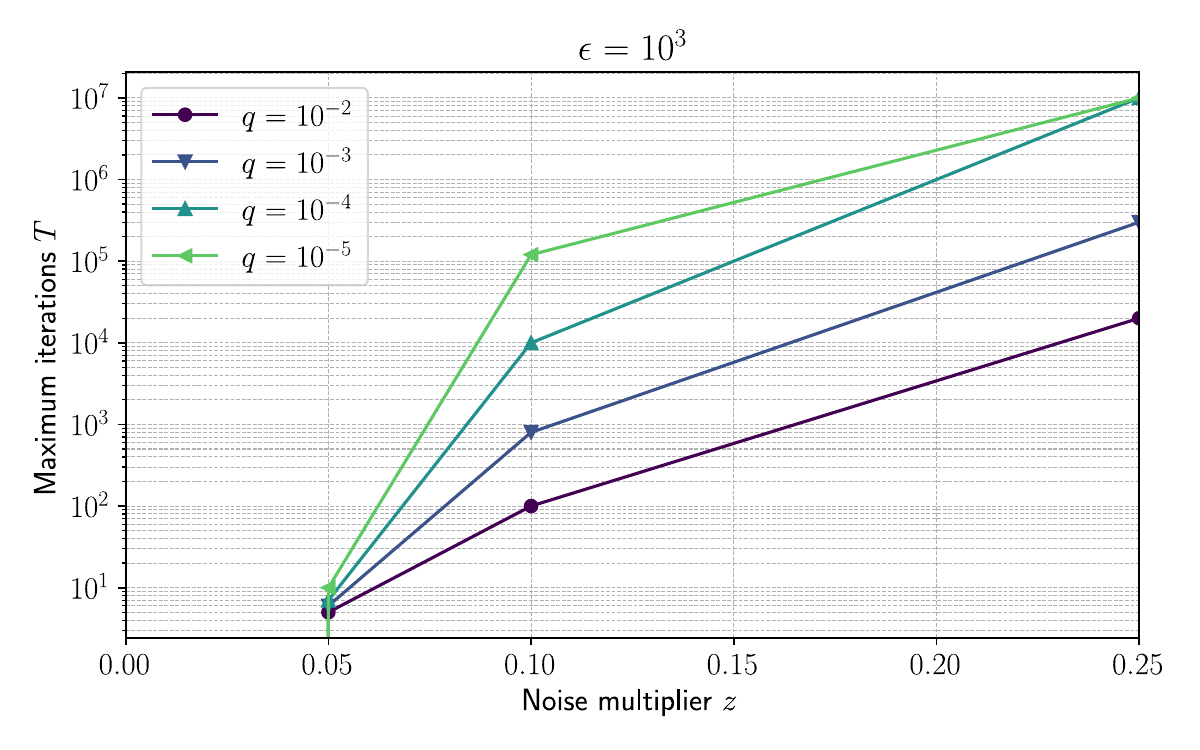}
\end{minipage}
\hfill
\begin{minipage}{0.49\textwidth}
\includegraphics[width=\linewidth]{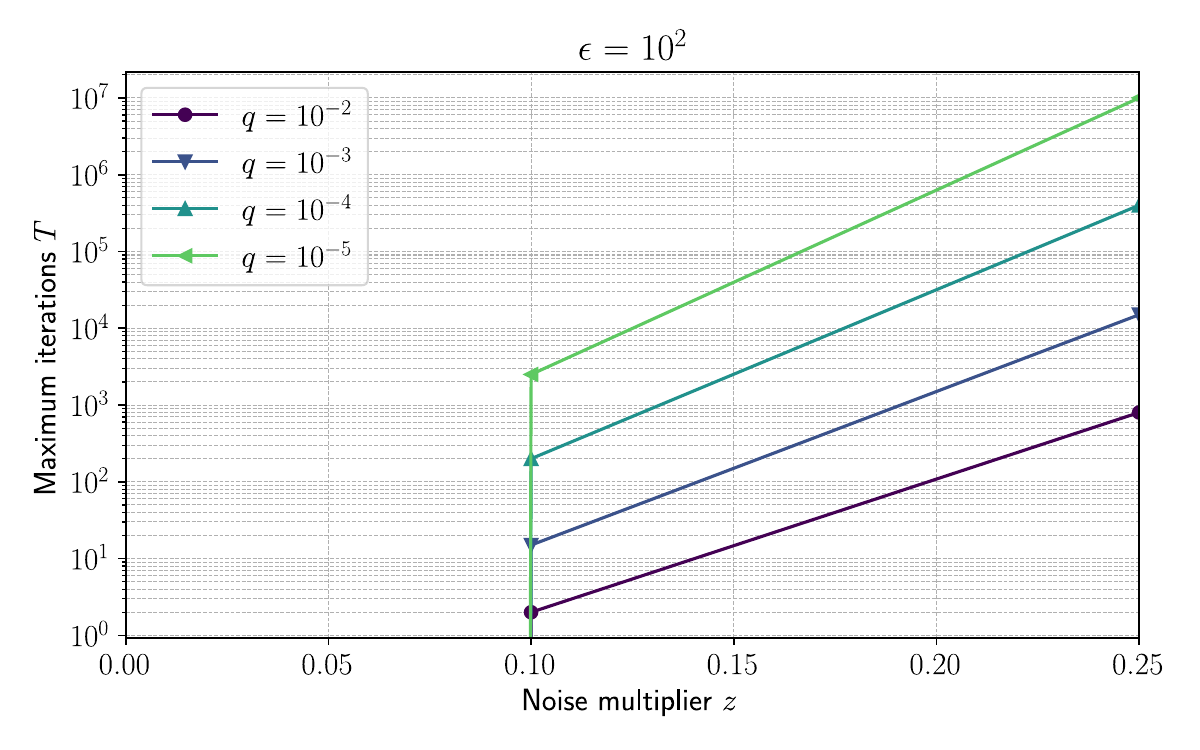}
\end{minipage}
\vspace{-2pt}
\caption{Maximum number of iterations $T$ so that the privacy loss does not exceed $\epsilon$, as function of the noise multiplier $z$.}
\label{fig:epsilon_plots}
\end{figure*}


\newpage 
\section{Broader Impacts} \label{sec:broader_impacts}

As recent advances in the field have moved reinforcement learning closer to widespread real-world application, from healthcare to autonomous driving, and as many works have shown that it is no more immune to privacy attacks than any other area in machine learning, it has become crucial to design algorithmic techniques that protect user privacy. In this paper, we contribute to this endeavor by introducing a new approach to privacy in offline RL, tackling more complex control problems and thus paving the way towards real-world private reinforcement learning. We firmly believe in the importance of pushing the boundaries of this research field and are hopeful that this work will contribute to practical advancements in achieving trustworthy machine learning.

\end{document}